\newcommand{\norm}[1]{\left\lVert#1\right\rVert}
\newcommand{\ridge}{\hat{\bbeta}^{\text{ridge}}}
\newcommand{\argmin}{\mathop{\mathrm{argmin}}\limits}
\theoremstyle{plain}
\newtheorem{theorem}{Theorem}[section]
\newtheorem{lemma}[theorem]{Lemma}
\newtheorem{corollary}[theorem]{Corollary}
\theoremstyle{definition}
\newtheorem{assumption}[theorem]{Assumption}
\theoremstyle{remark}
\icmltitlerunning{Probabilistic Factorial Experimental Design for Combinatorial Interventions}
\begin{document}

\twocolumn[
\icmltitle{Probabilistic Factorial Experimental Design for Combinatorial Interventions}

%Experimental Design with Dosages for Combinatorial Interventions
% Randomized Experimentation
% Combinatorial Intervention
% Probabilistic Experimentation Design
% Batched Experimental Design

% It is OKAY to include author information, even for blind
% submissions: the style file will automatically remove it for you
% unless you've provided the [accepted] option to the icml2025
% package.

% List of affiliations: The first argument should be a (short)
% identifier you will use later to specify author affiliations
% Academic affiliations should list Department, University, City, Region, Country
% Industry affiliations should list Company, City, Region, Country

% You can specify symbols, otherwise they are numbered in order.
% Ideally, you should not use this facility. Affiliations will be numbered
% in order of appearance and this is the preferred way.
\icmlsetsymbol{equal}{*}

\begin{icmlauthorlist}
\icmlauthor{Divya Shyamal}{equal,1,2}
\icmlauthor{Jiaqi Zhang}{equal,1,3}
\icmlauthor{Caroline Uhler}{1,3}
%\icmlauthor{}{sch}
%\icmlauthor{}{sch}
\end{icmlauthorlist}

\icmlaffiliation{1}{Department of Electrical Engineering and Computer Science, MIT}
\icmlaffiliation{2}{Department of Mathematics, MIT}
\icmlaffiliation{3}{Eric and Wendy Schmidt Center, Broad Institute}

\icmlcorrespondingauthor{Jiaqi Zhang}{viczhang@mit.edu}
\icmlcorrespondingauthor{Caroline Uhler}{cuhler@mit.edu}

% You may provide any keywords that you
% find helpful for describing your paper; these are used to populate
% the "keywords" metadata in the PDF but will not be shown in the document
\icmlkeywords{Experimental Design, Factorial Experiment, Combinatorial Interventions}

\vskip 0.3in
]

% this must go after the closing bracket ] following \twocolumn[ ...

% This command actually creates the footnote in the first column
% listing the affiliations and the copyright notice.
% The command takes one argument, which is text to display at the start of the footnote.
% The \icmlEqualContribution command is standard text for equal contribution.
% Remove it (just {}) if you do not need this facility.

%\printAffiliationsAndNotice{}  % leave blank if no need to mention equal contribution
\printAffiliationsAndNotice{\icmlEqualContribution} % otherwise use the standard text.

\begin{abstract}
% introduce combo intervention & why it is important
% explain the probilistic experimental design framework
% A combinatorial intervention consists of multiple treatments applied to the same unit, where the treatments might have interactive effects, which has significant impact in biomedicine, engineering and beyond.
% Combinatorial interventions, an intervention that consists of a combination of treatments, appear in various fields including medicine (e.g. drug combinations). 
A \emph{combinatorial intervention}, consisting of multiple treatments applied to a single unit with potentially interactive effects, has substantial applications in fields such as biomedicine, engineering, and beyond. Given $p$ possible treatments, conducting all possible $2^p$ combinatorial interventions can be laborious and quickly becomes infeasible as $p$ increases. Here we introduce the \emph{probabilistic factorial experimental design}, formalized from how scientists perform lab experiments. In this framework, the experimenter selects a dosage for each possible treatment and applies it to a group of units. Each unit independently receives a random combination of treatments, sampled from a product Bernoulli distribution determined by the dosages. Additionally, the experimenter can carry out such experiments over multiple rounds, adapting the design in an active manner. We address the optimal experimental design problem within an intervention model that imposes bounded-degree interactions between treatments. In the passive setting, we provide a closed-form solution for the near-optimal design. Our results prove that a dosage of $\nicefrac{1}{2}$ for each treatment is optimal up to a factor of $1+O(\nicefrac{\ln(n)}{n})$ for estimating any $k$-way interaction model, regardless of $k$, and imply that $O\big(kp^{3k}\ln(p)\big)$ observations are required to accurately estimate this model. For the multi-round setting, we provide a near-optimal acquisition function that can be numerically optimized. We also explore several extensions of the design problem and finally validate our findings through simulations.

\end{abstract}

\section{Introduction}\label{sec:1}

% introduce combinatorial intervention and why it is important
In many domains, it is often of interest to consider the simultaneous application of multiple treatments/actions. 
For example, in cell biology, perturbing several genes is often necessary to induce a transition in cell state \cite{takahashi2006induction}. 
While a single treatment is constrained to a limited range of possible effects, a combinatorial intervention -- comprising multiple treatments applied to the same unit -- can result in a much wider array of outcomes.
Much of this potential stems from the interactive effects between treatments, rather than merely the additive contributions of each.
For example, perturbing paralogs (a pair of genes) can have a surprisingly larger effect than the sum of perturbing each gene individually, as one gene may compensate for the other, and only perturbing both simultaneously will effectively disrupt the pathway \cite{koonin2005orthologs}.
However, these interactions make the study of combinatorial interventions considerably more challenging than understanding single interventions alone, as each set of treatments may exhibit distinct interactions.

% introduce experimental design, factorial design and the difficulties (A factorial design allows the effect of several factors and even interactions between them to be determined with the same number of trials as are necessary to determine any one of the effects by itself with the same degree of accuracy.) 
%("No aphorism is more frequently repeated in connection with field trials, than that we must ask Nature few questions, or, ideally, one question, at a time. The writer is convinced that this view is wholly mistaken. Nature, he suggests, will best respond to a logical and carefully thought out questionnaire; indeed, if we ask her a single question, she will often refuse to answer until some other topic has been discussed.")

From the design perspective, the problem of testing combinatorial interventions to analyze the combined effects of various treatments is known as a factorial design \cite{fisher1966design}.
Given $p$ possible treatments with large $p$, it is often infeasible to conduct all possible $2^p$ combinatorial interventions, which corresponds to a full factorial design.
To address the scalability challenge, fractional factorial designs are introduced, which test only a subset of possible combinations. 
However, selecting this subset is difficult when prior knowledge is limited.
Choosing a suboptimal subset may lead to a biased understanding of the experimental landscape.
In addition, performing a large and specified subset of combinatorial interventions can be laborious and impractical, as each combination must be precisely assembled. In the perturbation example, this involves synthesizing a unique guide sequence for each combination that targets the specific genes involved \cite{rood2024toward}. However, when the combination size is large, this becomes infeasible as a longer guide sequence may lack sufficient penetrance to effectively enter the targeted cells.
To tackle these issues, we here formalize and study a scalable and unbiased approach to design factorial experiments. 

Inspired by how scientists perform library designs in the lab \cite{yao2024scalable}, we introduce \emph{probabilistic factorial experimental design}. In this framework, the experimenter selects a dosage for each possible treatment and applies it to a group of units. Each unit independently receives a random combination of treatments, sampled from a product Bernoulli distribution determined by the specified dosages. In the perturbation example mentioned above, this setup formalizes a high-multiplicity of infection (MOI) perturbation experiment \cite{yao2024scalable}, where multiple perturbations are applied at various MOI to a plate of cells, and each cell receives a combination of perturbations randomly. 
The introduction of a probabilistic design via dosages allow us to interpolate between an unbiased but expensive full factorial design and a relatively scalable but restricted fractional factorial design. 
By adjusting the dosages, we can effectively scale up a full factorial design by controlling the proportion of units receiving each combination in a realistic manner. Crucially, this approach remains unbiased as it does not require restricting the experiment to a predetermined subset of treatments.
The question is then how to optimally design the dosages, e.g., in order to efficiently learn the interactions. 

\paragraph{Contributions.} Our contributions are summarized below.

\begin{itemize}
\item We propose and introduce the probabilistic factorial design, motivated by library design experiments in the lab (section~\ref{sec:2.1}). This setup assumes that treatments are randomly assigned to a group of units according to a prescribed dosage vector. It provides a scalable and flexible approach to implement factorial experiments, which we show to encapsulate both full and fractional factorial designs as special cases.
\item Within this framework, we address the problem of optimal experimental design, which involves optimizing the dosage vectors based on a given objective. Our main focus is on learning the underlying combinatorial intervention model using a Boolean function representation assuming bounded-order interactions.
\begin{itemize}
\item In the passive setting (section~\ref{sec:3.1}), we prove that assigning a dosage of $\nicefrac12$ to each treatment is near-optimal for estimating any $k$-way interaction model, leading to a sample complexity of $O(kp^{3k}\ln(p))$.
\item In the active setting (section~\ref{sec:3.2}), we introduce an acquisition function that can be numerically optimized and demonstrate that it is also near-optimal in theory.
\end{itemize}
\item We explore several extensions to the design problem, including constraints on limited supply, heteroskedastic multi-round noise, and emulation of a target combinatorial distribution (section~\ref{sec:4}). Finally, we validate our theoretical findings through simulated experiments (section~\ref{sec:5}).
\end{itemize}

\section{Related Works}

\paragraph{Factorial design.}
Factorial experimental design has been extensively studied for its efficacy in evaluating multiple treatments simultaneously. Classical methods include full and fractional factorial designs \cite{fisher1966design}, and have been applied to various applications in biology, agriculture, and others (c.f., \cite{hanrahan2006application}). Full factorial design considers all possible treatment combinations, where each treatment may have multiple levels \cite{deming1993experimental,lundstedt1998experimental,dean1999design}. These experiments are sometimes conducted in multiple blocks, where each block is expected to have a controlled condition of external factors and contains one replicate of either all or partial combinations. When the number of total treatments increases, conducting such experiments quickly becomes infeasible. In these cases, fractional factorial design are preferred where a subset of carefully selected treatment combinations are tested \cite{gunst2009fractional}. A $2^{-m}$ fractional design is one where $2^{p-m}$ samples are used, each with a different combination \cite{box1978statistics}. These combinations are carefully selected to minimize aliasing. Aliasing occurs when, for the combinations selected, the interactions are linearly dependent \cite{gunst2009fractional,mukerjee2007modern}. In a full factorial design, there is linear independence, so there is no confounding when the model is fit. In a fractional design, some aliasing will always occur in a full-degree model; however, methods proposed in the literature select combinations such that the aliasing of important effects (i.e. degree-1 terms) does not occur \cite{gunst2009fractional}. With little prior knowledge, it is common to assume that low-order effects are more important than higher-order interactions and select designs to focus on low-order effects \cite{cheng2016theory}. With a low-degree assumption, aliasing can be avoided entirely. Fractional designs can be classified by their \textit{resolution} (denoted by $R$), which determines which interactions can be potentially confounded. For example, a Resolution V fractional design eliminates any confounding between lower than degree-3 interactions, appropriate for degree-2 functions \cite{montgomery2017design}. Of particular interest in literature are \textit{minimum aberration designs}, which minimize the number of degree-$l$ terms aliased with degree-$R-l$ terms \cite{fries1980minimum, cheng2016theory}.  However, scalability to high-dimensional problems remains a challenge, and efficient sampling methods such as Bayesian optimization are proposed \cite{mitchell1995two,kerr2001bayesian,chang2018bayesian}.

The probabilistic setting proposed in this paper serves as a flexible realization of a factorial design that automatically generates a design resembling either a full factorial or a fractional factorial design, depending on the selected dosages. We formally discuss this in section~\ref{sec:2.1}.

\paragraph{Learning combinatorial interventions.} 
Modeling combinatorial interventions is crucial for understanding their interactions and designing experiments. There are multiple ways to model such interventions, often by imposing structures that relate different combinations. For example, the Bliss independence \cite{bliss1939toxicity} and Loewe additivity \cite{loewe1926effect} models are commonly used to describe additive systems where no interactions between treatments are assumed.

An alternative approach is to use a structural causal model (SCM) and the principal of independent causal mechanisms \cite{eberhardt2007causation,eberhardt2007interventions}. In particular, this assumes that (1) each single-variable intervention alters the dependency of that variable on its parent variables according to the SCM, and (2) a combinatorial intervention modifies each involved variable according to its respective single-variable intervention and then combines these changes in a factorized joint distribution. Within this framework, various types of interventions, including do-, hard-, and soft-interventions, can be defined (e.g., \cite{correa2020calculus,zhang2023active}).
However, similar to the Bliss independence and Loewe additivity models, SCM-based approaches cannot capture interactions between treatments.

To model such interactions, one can use a generalized surface model, which can be instantiated via polynomial functions \cite{lee2010drug} or Gaussian processes \cite{shapovalova2022non}. Alternatively, Boolean functions provide another modeling framework \cite{agarwal2023synthetic}, where theoretical tools such as the Fourier transform can be leveraged \cite{o2008some}. \citeauthor{agarwal2023synthetic} has employed this approach, where sparsity and rank constraints are used to enforce structural assumptions on combinatorial interactions.
In this paper, we also utilize Boolean functions, where we demonstrate their close relationship with generalized surface models. We show that interactions can be read-off from Fourier coefficients, allowing us to formalize assumptions about the degree of interactions.

\section{Setup and Model}\label{sec:2}

% \subsection{Probabilistic Factorial Design}
% for the setup: need to introduce probabilistic factorial design
% its benefit: generalize factorial design

In this section, we propose and define the setup of probabilistic factorial experimental design. We then introduce the outcome model we use to model combinatorial interventions and discuss its applicability to model interactions.

\subsection{Probabilistic Factorial Design Setup}\label{sec:2.1}
Consider $p$ possible treatments with $2^p$ total combinatorial interventions. In a \emph{probabilistic factorial experimental design}, the experimenter chooses a vector of \emph{dosages}, denoted by $\bd=(d_1,\dots, d_p)\in [0,1]^p$, and applies the treatments at this level to $n$ homogenous units. For simplicity, we consider no interference between units, where each unit independently receives a combinatorial intervention at random. Denote the intervention associated with unit $m\in [n]=\{1,\dots, n\}$ by $\bx_m\in \{-1,1\}^p$, where $x_{m,i}=1$ if and only if it receives a combinatorial intervention that contains treatment $i$. Here $\bx_m$ is randomly sampled according to a product Bernoulli distribution according to $\bd$, where
\begin{equation}\label{eq:prob-des}
  x_{m,i} =
\begin{cases} 
    1 & \text{with probability } d_i, \\
    -1 & \text{with probability   } 1-d_i.
\end{cases} 
\end{equation}
The experimenter can carry out such experiments for $T$ times, with different dosages $\bd_1,\cdots, \bd_T$, potentially in a sequential and adaptive manner. In combinatorial perturbation example in section~\ref{sec:1}, the dosage vector formalizes the multiplicity of infection of each considered perturbation.

% \paragraph{Comparison to prior frameworks.} 

Note that this setup reduces to traditional two-level factorial design \cite{fisher1966design} by choosing $\bd\in\{0,1\}^p$. In particular, for any combinatorial intervention consisting of treatments in $S\subseteq [p]$, setting $d_{i}=1$ if $i\in S$ or else $d_i =0$ gives rise to all units receiving $S$. Allowing for continuous $\bd\in [0,1]^p$ generalizes this setup by enabling the allocation of units to different combinatorial interventions in a realistic and effective manner controlled by $\bd$.

% \subsection{Modeling Combinatorial Interventions using Boolean Functions}
% for the model: need to show that it generalizes previous definition of combinatorial interventions

\subsection{Outcome Models for Combinatorial Interventions}
Under this setup, we are interested in estimating the average treatment effect of combinatorial interventions. 
For unit $m$, we observe its treatment assignment $\bx_m$ and outcome $y_m\in\bbR$. We adopt the outcome model proposed by \citet{synthetic}, where $y_m$ corresponds to a noisy observation of a \emph{real-valued Boolean function} $f:\{-1,1\}^p\rightarrow\bbR$, i.e., 
$$y_m= f(\bx_m)+\epsilon_m.$$
Here we assume $\epsilon_m$ is independent among different units and is normally distributed with mean zero and variance  $\sigma^2$. This model choice has the flexibility of allowing for interactions between arbitrary sets of treatments, as we illustrate below. 

The class of real-valued Boolean functions admits a representation via the \emph{Fourier basis} 
$$\{\phi_S(\bx)= \prod_{i\in S} x_i \mid S\subseteq [p]\}$$ by
$$
f(\bx) = \sum_{S\subseteq [p]} \beta_S\phi_S(\bx).
$$
Here $\beta_S = \frac{1}{2^p}\sum_{\by\in \{-1,1\}^p} f(\by)\phi_S(\by)$ (see Appendix~\ref{app:sec2} for details). The Fourier coefficients are interpretable in the sense that the polynomial instantiation of the generalized response surface model \cite{lee2010drug} can be expressed in this form, where all the $k$-way interactions are captured by $$\{\beta_S\mid S\subseteq [p], |S|\leq k\}.$$ In particular, the generalized response surface model can be written as follows.

\textbf{Polynomial Instantiation.} To capture the nonlinear interactions between treatments, we can model the outcome of combinatorial intervention $\bx$ via
\begin{equation*}
    f(\bx) = \sum_{i=1}^p \alpha_i \ind_{x_i=1} + \sum_{i,j=1}^p \alpha_{ij} \ind_{x_i=x_j=1} + \dots,
\end{equation*}
where $\alpha_S$ represents the contribution in the final outcome by the interaction among treatments in $S$. This model can be represented via the Fourier representation (see Appendix~\ref{app:sec2} for details), where
\begin{equation}\label{eq:poly}
    \beta_S = \sum_{S\subseteq T} \frac{\alpha_T}{2^{|T|}}.
\end{equation}
In a bounded-order interaction model, $\alpha_S =0$ for large $|S|$. In particular, if $\alpha_S=0$ for $|S|>k$, then $\beta_S=0$ for $|S|>k$ according to Eq.~\eqref{eq:poly}. This motivates us to make the following assumptions on the Fourier coefficients.

\begin{assumption}[Bounded-order interactions]\label{ass:low-order}
The outcome model exhibits bounded-order interactions, i.e., there exists $k=o(p)$ such that
$$\beta_S = 0 \quad\text{if}\quad |S|> k.$$
\end{assumption}
We also assume that $\beta$ is bounded in $L_2$ norm. 
\begin{assumption} (Boundedness of $\beta$) There exists a constant $B$ such that $\|\beta\|_2
 \leq B$.
\end{assumption}

\section{Optimal Experimental Design}\label{sec:3}

In this section, we focus on optimal experimental design for learning the outcome model $f$. We consider extensions of these results in section \ref{sec:4}. For the objective of learning $f$, we provide near-optimal design strategies for the choice of dosages $\bd$ in both passive and adaptive scenarios. We start by introducing the estimators of $f$. All formal proofs in this section are deferred to Appendix~\ref{app:sec3}.

\subsection{Estimators} 
Estimating the Fourier coefficients $\bbeta$ accurately in turn gives an accurate estimate of $f$, as $\|\hat{f}(\bx)-f(\bx)\|_2\leq \|\hat{\bbeta}-\bbeta\|_2$, where $\hat{f}(\bx) = \sum_{S\subseteq [p]} \hat{\beta}_S \phi_S(\bx)$. Therefore, it suffices to focus on $\hat{\bbeta}$.

Denote the collected dataset as $\mathbb{D}=\{(\bx_m,y_m)\mid m\in [n]\}$. Let the design matrix be $\cX\in \bbR^{n\times K}$ with $K = \sum_{i=0}^k \binom{p}{k}$. The columns of $\cX$ corresponds all possible combinations (including size $\leq 1$) with interactions, i.e., $S\subseteq [p]$ with $|S|\leq k$. The $m$-th row of $\cX$ corresponds to the Fourier characteristics of the observed combination $\bx_m$ with
$$
\cX_{m, S} = \phi_S(\bx_m) = \prod_{i\in S} x_{m,i}.
$$
Given that $\cX$ is randomly drawn according to the dosages $\bd$ and its columns are correlated, it is possible that it is ill-conditioned for a standard linear regression estimator. Therefore, in order to control the estimation error, we use a truncated ordinary least squares (OLS) to estimate $\bbeta$:
$$
\hat{\bbeta} = 
\begin{cases}
    (\cX^\top \cX)^{-1} \cX^\top Y & \text{if } \sum_{i=1}^K \lambda_i(\cX^\top\cX)^{-1}\leq \frac{B^2}{\sigma^2},\\
    \textbf{0} & \text{otherwise}.
\end{cases}
$$
Here $\lambda$ denotes the eigenvalues of $\cX^\top\cX$ and $Y$ is the vector by stacking $y_m$ with $m\in [n]$. 
Note that this results in a null estimator when the eigenvalues are small. 
We use this to demonstrate the key ideas of our analysis in a simpler form
In practice, when $\cX$ is ill-conditioned, alternative estimators such as ridge regression can be used, where similar theoretical results can be derived (see Appendix~\ref{app:sec3} for details). 
The truncated OLS estimator satisfies the following property, which we utilize in our analysis.

\begin{lemma}\label{lm:4.1}
    Given a fixed design matrix $\cX$, the truncated OLS estimator satisfies 
    \begin{equation}\label{eq:trunc-OLS-bound}
    \begin{aligned}
    \min\{\sum_{i=1}^K \frac{\sigma^2}{\lambda_i(\cX^\top\cX)},&  \|\bbeta\|_2^2\} \leq
    \\\bbE_Y\big[\|\hat{\bbeta} - \bbeta\|_2^2\big] & \leq \min\{\sum_{i=1}^K \frac{\sigma^2}{\lambda_i(\cX^\top\cX)}, B^2\}.
    \end{aligned}
    \end{equation}
\end{lemma}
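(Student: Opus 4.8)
The plan is to prove \cref{lm:4.1} by a direct case analysis on the truncation condition that defines $\hat{\bbeta}$, since for a fixed design matrix the estimator takes one of exactly two explicit forms. Throughout I would write $V := \sum_{i=1}^K \frac{\sigma^2}{\lambda_i(\cX^\top\cX)}$ for the variance term appearing in the bound, so that the truncation condition $\sum_{i=1}^K \lambda_i(\cX^\top\cX)^{-1} \le B^2/\sigma^2$ is exactly the statement $V \le B^2$. Stacking the outcome model and invoking \cref{ass:low-order} gives $Y = \cX\bbeta + \epsilon$ with $\epsilon = (\epsilon_1,\dots,\epsilon_n)^\top \sim \mathcal N(\mathbf 0, \sigma^2 I_n)$, which is the form I would feed into the OLS computation.

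First I would handle the branch $V \le B^2$, where $\hat{\bbeta} = (\cX^\top\cX)^{-1}\cX^\top Y$ is the ordinary least squares estimator. Substituting $Y = \cX\bbeta + \epsilon$ yields $\hat{\bbeta} - \bbeta = (\cX^\top\cX)^{-1}\cX^\top \epsilon$, which is mean-zero, so the estimator is unbiased and its mean-squared error reduces to a pure variance term. Taking $\bbE_Y$, using $\bbE[\epsilon\epsilon^\top] = \sigma^2 I_n$ and the cyclic property of the trace gives
$$
\bbE_Y\big[\|\hat{\bbeta} - \bbeta\|_2^2\big] = \sigma^2\,\mathrm{tr}\big((\cX^\top\cX)^{-1}\big) = \sum_{i=1}^K \frac{\sigma^2}{\lambda_i(\cX^\top\cX)} = V.
$$
Since $V \le B^2$ in this branch, $\min\{V, B^2\} = V$, so the upper bound holds with equality, while the lower bound follows at once from $\min\{V, \|\bbeta\|_2^2\} \le V = \bbE_Y[\|\hat{\bbeta}-\bbeta\|_2^2]$.

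Next I would treat the complementary branch $V > B^2$, where $\hat{\bbeta} = \mathbf 0$ and hence $\bbE_Y[\|\hat{\bbeta} - \bbeta\|_2^2] = \|\bbeta\|_2^2$ deterministically. Here $\min\{V, B^2\} = B^2$, and the boundedness assumption $\|\bbeta\|_2 \le B$ delivers the upper bound $\|\bbeta\|_2^2 \le B^2$; the lower bound $\min\{V, \|\bbeta\|_2^2\} \le \|\bbeta\|_2^2$ is immediate. Combining the two branches establishes \cref{eq:trunc-OLS-bound}.

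I do not expect a serious obstacle: the only substantive step is the trace identity for the OLS variance, and the remainder is bookkeeping. The point requiring the most care is matching the $\min$ in each bound to the active branch — noting that it is precisely the threshold comparison $V \lessgtr B^2$ that selects which argument of $\min\{V, B^2\}$ is attained, and that the boundedness assumption enters only in the truncated ($\hat{\bbeta} = \mathbf 0$) branch to control the upper bound. It is also worth remarking, to motivate the two-sided form of the lemma, that the upper bound is in fact achieved with equality in each branch, so the bounds are essentially tight, with the lower bound loose only through the $\min$ with $\|\bbeta\|_2^2$.
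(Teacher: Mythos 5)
Your proof is correct and takes essentially the same approach as the paper's: an exact case split on the truncation condition, with the unbiased-OLS trace identity $\bbE_Y\big[\|\hat{\bbeta}-\bbeta\|_2^2\big]=\sigma^2\,\mathrm{tr}\big((\cX^\top\cX)^{-1}\big)=\sum_{i=1}^K \sigma^2/\lambda_i(\cX^\top\cX)$ in the non-truncated branch and $\|\bbeta\|_2^2\le B^2$ in the truncated branch. If anything, your write-up is marginally cleaner, since the paper states the trace step as an inequality where (as you note) equality holds because $\bbE[\epsilon\epsilon^\top]=\sigma^2 I_n$ exactly.
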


% The design matrix $\mathbb{X}$ is $n \times M$, where $M = \sum_{i=0}^k \binom{p}{k}$. If $\mathbf{x}_i$ is the $i$th observed combination, we have that
% $$X_{i,S} = \phi_S(\mathbf{x}_i) = \prod_{i\in S}\mathbf{x}_i$$
% so that the rows of $X$ correspond to the Fourier characteristics of the observed combinations.

\subsection{Passive Setting}\label{sec:3.1}
In this scenario, the experimenter decides the choice of the dosages $\bd$ in a prospective fashion without considering any data collected in the past. This is in contrast to an active design, where collected data are utilized to decide the current design. Note that the first round of any active setting reduces to the passive scenario, as there is no collected data.

Suppose we have a budget of $n$ units. To select $\bd$ such that we can obtain the most accurate estimate of $\bbeta$ after observing these units, it is natural to optimize the following objective:
\begin{equation}\label{eq:obj}
    \bbE_{\bbD} \left[\|\hat{\bbeta} -\bbeta\|_2^2\right].
\end{equation}
We show that this objective has a closed-form near-optimal solution of $\bd=(\nicefrac{1}{2},\cdots, \nicefrac{1}{2})$, regardless of the order of the interactions. 

\begin{theorem}\label{thm:4.2}
    For the truncated OLS estimator, $\bd=(\nicefrac{1}{2},\cdots, \nicefrac{1}{2})$ is optimal up to a factor of $1+O\left(\frac{\ln(n)}{n}\right)$  with respect to Eq.~\eqref{eq:obj}. In addition,  the minimizer of Eq.~\eqref{eq:obj} lies in an $l_\infty-$norm ball centered on the half dosage with radius $O\left(\sqrt{\frac{\ln(n)}{n}}\right)$.
\end{theorem}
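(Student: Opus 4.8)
The plan is to pass from the random estimation error to the spectrum of the Gram matrix $M:=\cX^\top\cX$ and to exploit a deterministic conservation law. First I observe that every diagonal entry of $M$ equals $\sum_{m=1}^n \phi_S(\bx_m)^2=n$, because $\phi_S(\bx_m)\in\{-1,1\}$; hence $\mathrm{tr}(M)=\sum_i\lambda_i(M)=nK$ for \emph{every} design $\bd$ and every realization. Consequently, whenever $M$ is invertible, Cauchy--Schwarz (equivalently AM--HM) gives $\sum_{i=1}^K \sigma^2/\lambda_i(M)\ge \sigma^2K^2/\mathrm{tr}(M)=\sigma^2K/n$, with equality iff all eigenvalues equal $n$. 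Feeding this into the left inequality of Lemma~\ref{lm:4.1}, and noting that on singular draws its lower bound equals $\|\bbeta\|_2^2$, I obtain the \emph{design-independent} floor $\bbE_\bbD[\|\hat\bbeta-\bbeta\|_2^2]\ge \min\{\sigma^2K/n,\|\bbeta\|_2^2\}=\sigma^2K/n$ in the nontrivial regime $\|\bbeta\|_2^2\ge \sigma^2K/n$. This already shows no design can beat $\sigma^2K/n$; it remains to show the half dosage nearly attains it.

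For the matching upper bound at $\bd=(\tfrac12,\dots,\tfrac12)$, note $\bbE_\cX[M]=nI$ since the Fourier features are orthonormal when $\mu_i:=2d_i-1=0$. A matrix Chernoff bound applied to $M=\sum_m \mathbf{z}_m\mathbf{z}_m^\top$ (rows $\mathbf{z}_m$ with $\|\mathbf{z}_m\|_2^2=K$) yields $\|M-nI\|_{\mathrm{op}}\le n\delta$ with $\delta\asymp\sqrt{K\ln(n)/n}$ off an event of probability $\le n^{-2}$. The decisive point is that the conservation law $\sum_i(\lambda_i(M)-n)=0$ kills the first-order fluctuation: writing $t_i=(\lambda_i-n)/n$ with $\sum_i t_i=0$ and $|t_i|\le\delta$, one has $\sum_i 1/\lambda_i=\tfrac1n\sum_i\tfrac{1}{1+t_i}\le \tfrac1n\big(K+\tfrac{1}{1-\delta}\sum_i t_i^2\big)\le \tfrac{K}{n}\big(1+O(\delta^2)\big)$. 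Since $\delta^2\asymp K\ln(n)/n$, this equals $\tfrac{\sigma^2K}{n}(1+O(\ln(n)/n))$ on the good event (treating the order $k$, hence $K$, as fixed in the $O(\cdot)$); on its rare complement the truncation caps the error at $B^2$, contributing only $B^2n^{-2}$, of lower order. Combining with the right inequality of Lemma~\ref{lm:4.1} and dividing by the floor proves the first assertion. I expect this trace-cancellation to be the crux: the operator-norm bound alone, being only $O(\delta)$, would yield the weaker factor $1+O(\sqrt{\ln(n)/n})$.

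To localize the minimizer I would analyze the population Gram matrix $G(\bd):=\tfrac1n\bbE_\cX[M]$, whose entries are $G_{S,T}=\prod_{i\in S\triangle T}\mu_i$; it has unit diagonal, so $\mathrm{tr}(G)=K$, and the same AM--HM argument gives $\mathrm{tr}(G^{-1})\ge K$ with equality iff $\mu=0$. A second-order expansion around $\mu=0$ shows the gradient vanishes (the first-order perturbation matrices, supported on pairs with $S\triangle T=\{\ell\}$, are traceless) while the Hessian is positive definite, so $\mathrm{tr}(G(\bd)^{-1})=K+c\|\mu\|_2^2+O(\|\mu\|_2^3)$ for a constant $c>0$. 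Using convexity of $X\mapsto\mathrm{tr}(X^{-1})$ together with the concentration of $M$ about $nG(\bd)$, I would lower bound the objective of an arbitrary design by $\tfrac{\sigma^2}{n}(K+c\|\mu\|_2^2)$ up to lower-order terms. Any minimizer then has objective at least this much while being at most the half-dosage value $\tfrac{\sigma^2K}{n}(1+O(\ln(n)/n))$; comparing forces $\|\mu\|_2^2=O(\ln(n)/n)$, hence $\|\bd-(\tfrac12,\dots,\tfrac12)\|_\infty=\tfrac12\|\mu\|_\infty=O(\sqrt{\ln(n)/n})$.

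The main obstacle, beyond the trace-cancellation already flagged, is making this last lower bound tight enough: plain Jensen controls $\bbE_\cX[\mathrm{tr}(M^{-1})]$ but not the truncated quantity $\bbE_\cX[\min\{\sum_i\sigma^2/\lambda_i,\|\bbeta\|_2^2\}]$, and a naive operator-norm transfer loses a $\sqrt{\ln(n)/n}$ factor that would degrade the radius to $(\ln(n)/n)^{1/4}$. I would therefore restrict to the high-probability well-conditioned event, where truncation is provably inactive because $\sum_i\sigma^2/\lambda_i\approx\sigma^2K/n\ll\|\bbeta\|_2^2$, and re-run the conservation-law cancellation around the spectrum of $nG(\bd)$, so that only the second-order $O(\delta^2)$ term survives and the quadratic growth $c\|\mu\|_2^2$ is preserved against the fluctuations.
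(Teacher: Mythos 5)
Your proof of the first assertion is correct, and it takes a genuinely different route from the paper that is, in one respect, sharper. For the lower bound, the paper does not use your deterministic floor: it lower-bounds the error of an arbitrary $\bd$ by concentrating $\tfrac1n\cX^\top\cX$ around $\Sigma(\bd)$ and applying Cauchy--Schwarz to the shifted spectrum $\lambda_i(\Sigma(\bd))+\delta$. Your observation that $\mathrm{tr}(\cX^\top\cX)=nK$ holds \emph{per realization}, so that AM--HM gives the design-independent floor $\min\{\sigma^2K/n,\|\bbeta\|_2^2\}$ with no concentration at all, is cleaner and immediately covers arbitrary distributions over combinations (the paper needs a separate extension, Theorem B.6, for that). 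More importantly, your trace-cancellation step is not cosmetic: in the paper's own displayed ratio the parameter $\delta\asymp\sqrt{\ln(n)/n}$ enters \emph{linearly} through $\tfrac{1}{1-\delta_1}$ against $\tfrac{1}{1+\delta_2}$, which as written yields only $1+O\bigl(\sqrt{\ln(n)/n}\bigr)$; your expansion $\sum_i\frac{1}{1+t_i}\le K+\frac{1}{1-\delta}\sum_i t_i^2$ with $\sum_i t_i=0$ is exactly the mechanism that makes the first-order term vanish and substantiates the advertised $1+O(\ln(n)/n)$ rate (for fixed $K$; your bad-event contribution $B^2n^{-2}$ is indeed lower order against the floor $\sigma^2K/n$).

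The second assertion is where your proposal has a genuine gap, and you partly flagged it yourself. The step ``re-run the conservation-law cancellation around the spectrum of $nG(\bd)$'' does not go through verbatim: writing $\lambda_i(\cX^\top\cX)=n(g_i+e_i)$ with $g_i=\lambda_i(G(\bd))$, Weyl gives $|e_i|\le\delta$ and the trace identity gives $\sum_i e_i=0$, but this constraint no longer annihilates the first-order term $\sum_i e_i/g_i^2$ unless the $g_i$ are all equal --- which is precisely what fails away from the half dosage. An adversarial $e$ costs you a first-order loss of order $\delta\cdot\mathrm{spread}(1/g_i^2)$. The argument is salvageable: near $\mu=0$ the spectral spread of $G(\bd)$ is itself $O(\|\mu\|)$, so the loss is $O(\delta\|\mu\|)$ and is dominated by your quadratic signal $c\|\mu\|_2^2$ once $\|\mu\|\gg\delta$, recovering the radius $O(\sqrt{\ln(n)/n})$; and your local expansion needs a global complement for $\mu$ bounded away from $0$, which the trace identity again supplies via $\mathrm{tr}(G^{-1})=K+\sum_i(1-g_i)^2/g_i\ge K+\|G-I\|_F^2/K$. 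But as written these are plans, not proofs. The paper avoids this delicacy entirely with a simpler structural lemma: exhibiting the $2\times2$ principal submatrix of $\Sigma(\bd)$ indexed by $\emptyset$ and $\{i^*\}$ forces $\lambda_{\min}(\Sigma(\bd))\le\min_i(1-|2d_i-1|)$, a deficit \emph{linear} in the $\ell_\infty$-deviation; a plain first-order transfer ($\lambda$ shifted by $\delta$) then pits a linear signal against $O(K\delta)$ noise and yields the $\ell_\infty$-radius $O(\sqrt{\ln(n)/n})$ directly, with no cancellation needed for this part. If you complete your cross-term estimate you would get the (stronger) $\ell_2$-localization, but the paper's route is the more economical path to the stated claim.
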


Note that with the half dosage, the probability of observing any particular combinatorial intervention $S\subseteq [p]$ is $2^{-p}$.
Therefore in the passive setting, it is always optimal to evenly administer every treatment so that the observed combinatorial interventions follow a uniform distribution.
%In the equivariance case, i.e. $\mathbb{E}[\epsilon^2] = \sigma^2$ regardless of the combination, we may use the exact expected $L_2$ errors for OLS which involve the entire spectrum of $\mathbb{X}^T\mathbb{X}$. In addition, the probability that ridge yields a better bound than OLS decays to $0$. Therefore, we still have that the $\frac{1}{2}$ dosage is nearly optimal for the exact expected $L_2$ errors in the finite sample case, and optimal asymptotically \todo{appendix}. Therefore, we have the stronger result that the $\frac{1}{2}$ dosage minimizes the expected $L_2$ error, rather than just an upperbound on it.

%\begin{theorem}
%    If $\sigma_{\mathbf{x}}^2 = \sigma^2$ for all $\mathbf{x}$, then the $\frac{1}{2}$ dosage is optimal to a factor of \todo{need to prove} in the expected $L_2$ error of both 
%\end{theorem}

\begin{proof}[Proof sketch.] In Lemma~\ref{lm:4.1}, we show how to bound the expectation of the error $\|\hat{\bbeta} -\bbeta\|_2^2$ with respect to randomness in the outcome $Y$. To obtain the optimal dosage for Eq.~\eqref{eq:obj}, we need to additionally take expectation with respect to the randomness of $\cX$, which is where the dosages enter as the combinatorial interventions $\bx$ are sampled from Eq.~\eqref{eq:prob-des}. Note that $\bbE[\cX^\top\cX] = n\cdot\Sigma(\bd)\in\bbR^{K\times K}$, where
\begin{equation}\label{eq:sigma}
    \Sigma(\bd)_{S,S'} = \prod_{i\in S\Delta S'}(2d_i - 1),
\end{equation}
for any $S,S'\subseteq [p]$ such that $|S|,|S'|\leq k$.\footnote{Here $\Delta$ denotes the disjunctive union, i.e., $S\Delta S' = (S\cup S')\setminus (S\cap S')$.} 

\textit{Intuition of the optimality of half dosages.} For the standard OLS estimator, the expected squared error is $$\sigma^2\sum_{i=1}^K\frac{1}{\lambda_i(\cX^\top \cX)}.$$ If we directly swap $\cX^\top\cX$ with its expected value, then we need to minimize $$\sum_{i=1}^K\frac{1}{\lambda_{i}(\Sigma(\bd))}.$$ Note that $\text{tr}(\Sigma(\bd))=K$ for all $\bd$. Therefore, by the Cauchy-Schwarz inequality, $\sum_{i=1}^K\lambda_{i}(\Sigma(\bd))^{-1}$ is minimized if and only if $\lambda_i(\Sigma(\bd))=1$ for all $i\in [K]$, which is satisfied when $\Sigma(\bd)=\mathbf{I}_K$ and $\bd=(\nicefrac12,\dots,\nicefrac12)$.

To formally show that the expected error is optimized with half dosages, we can use a concentration result for $\cX^\top\cX$ which can be obtained using an $\epsilon$-net argument \cite{vershynin} and Hoeffding's inequality. However, the eigenvalues of $\cX^\top\cX$ enters the error computation through the denominators, which makes the computation difficult. In particular, $\bbE(\sum_{i=1}^K{\lambda_i(\cX^\top \cX)^{-1}})$ cannot be bounded due to exploding terms when $\lambda_{\min}(\cX^\top\cX)$ approaches zero. We resolve this difficulty by utilizing the bounds in Lemma~\ref{lm:4.1}. For $\bd=(\nicefrac12,\dots,\nicefrac12)$, we use the upper bound to show that 
\begin{align*}
\bbE_\bbD\left[\|\hat{\bbeta} -\bbeta\|_2^2\right] \leq &\frac{K\sigma^2}{n(1-\delta)} +\\ &B^2\exp\left( K\ln 9 - \frac{n\delta^2}{8K^2}\right)
\end{align*}
for any $0<\delta<1$. For $\bd$ such that $\max_{i}|2d_i-1|>0$, we use the lower bound to show that,
\begin{align*}
\bbE_\bbD\left[\|\hat{\bbeta} -\bbeta\|_2^2\right] \geq 
\left(1-2\exp\Big(K\ln 9 - \frac{\delta^2}{8K^2}\Big)\right)\cdot\\
\min\{\frac{\sigma^2}{n(1-\max_{i}|2d_i-1|+\delta)}+\frac{\sigma^2(K-1)}{n(1+\delta)}, \|\beta\|_2^2\},
\end{align*}
for any $\delta>0$.
By choosing $\delta=(\nicefrac{2\ln n}{n})^{\nicefrac12}$, we obtain that $\bd=(\nicefrac12,\dots,\nicefrac12)$ is optimal up to a factor of $1+O(\frac{\ln(n)}{n})$.
\end{proof}

As a corollary of the proof for Theorem~\ref{thm:4.2}, we can show the error of estimating $\bbeta$ decays with a rate of $n^{-1}$. 

\begin{corollary}\label{cor:sample-complexity}
    %Under the half-dosage, $$\bbE_\bbD\left[\|\hat{\bbeta} -\bbeta\|_2^2\right]\leq \frac{1}{n}\left(\frac{K\sigma^2}{1-\Theta\left(\sqrt{\frac{\ln(n)}{n^{1-\alpha}}}\right)}+1\right)$$ for $n>n_0$, where $n_0 = O\left(K^{\frac{2}{\frac{1}{2}-\alpha}}\right)$ for any $0<\alpha<\frac{1}{2}$.
    With $\bd=(\nicefrac12,\dots,\nicefrac12)$, there is $$\bbE_\bbD\left[\|\hat{\bbeta} -\bbeta\|_2^2\right]\leq \frac{2{K\sigma^2}+1}{n}$$ for $n>n_0$, where $n_0 = O\left(K^3\ln K\right)$.
\end{corollary}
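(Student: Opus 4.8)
The plan is to deduce the corollary directly from the upper bound on the half-dosage error that is already established inside the proof of Theorem~\ref{thm:4.2}, namely
\[
\bbE_\bbD\left[\|\hat{\bbeta} -\bbeta\|_2^2\right] \leq \frac{K\sigma^2}{n(1-\delta)} + B^2\exp\left( K\ln 9 - \frac{n\delta^2}{8K^2}\right),
\]
which holds for every $\delta\in(0,1)$. The first term is the genuine $n^{-1}$ statistical rate, while the second is an exponentially small failure term arising from the event that $\cX^\top\cX$ deviates far from its mean $n\,\Sigma(\bd)$ (controlled via the $\epsilon$-net plus Hoeffding argument used for the theorem). The whole task is to pick $\delta$ and a threshold $n_0$ that simultaneously make the first term match the advertised numerator and drive the second term below $1/n$.

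First I would set $\delta=\tfrac12$. This is the natural choice because it makes the leading term \emph{exactly} $\frac{K\sigma^2}{n(1-1/2)}=\frac{2K\sigma^2}{n}$, reproducing the constant $2K\sigma^2$ with no slack to carry around. It then remains only to show that the residual term satisfies $B^2\exp\big(K\ln 9 - \frac{n}{32K^2}\big)\le \frac1n$ once $n>n_0$, since adding the two contributions would immediately give $\bbE_\bbD[\|\hat{\bbeta}-\bbeta\|_2^2]\le \frac{2K\sigma^2+1}{n}$.

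The only substantive step is to solve this last inequality for $n$. Taking logarithms, it is equivalent to
\[
\frac{n}{32K^2}\ \ge\ K\ln 9 + \ln n + 2\ln B .
\]
Here the left-hand side is linear in $n$ whereas the right-hand side grows only logarithmically in $n$, so the inequality holds for all sufficiently large $n$; the point is to make the threshold explicit in $K$. The dominant $K$-dependence on the right is $K\ln 9 = O(K)$, so I would posit $n_0 = c\,K^3\ln K$ and verify the leading orders: for such $n$ the left side is $\tfrac{c}{32}K\ln K$, while the right side is $K\ln 9 + 3\ln K + \ln\ln K + O(1)$, which is dominated once $c$ exceeds a fixed constant because $K\ln K\gg K$. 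A standard ``linear-beats-log'' (Lambert-$W$ type) estimate makes this rigorous and shows any $n_0=O(K^3\ln K)$ suffices, with the $\ln K$ factor entering precisely through bounding the $\ln n$ term at the scale $n\sim K^3$.

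I expect no real obstacle here: the statistical content already lives in Theorem~\ref{thm:4.2}, and this corollary is essentially bookkeeping. The only mild care needed is the transcendental solve for $n_0$, and choosing a \emph{constant} $\delta=\tfrac12$ rather than a shrinking one is what both keeps that solve clean and produces the exact constant $2K\sigma^2$ in the numerator.
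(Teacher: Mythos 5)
Your proposal is correct and follows essentially the same route as the paper, which presents the corollary as a direct consequence of the upper bound established in the proof of Theorem~\ref{thm:4.2}: instantiate that bound with a constant $\delta=\nicefrac12$ so the leading term is exactly $\frac{2K\sigma^2}{n}$, then solve the resulting linear-versus-logarithmic inequality to get the exponential term below $\frac{1}{n}$ for $n>n_0$ with $n_0=O(K^3\ln K)$. Your observation that the shrinking choice $\delta=(\nicefrac{2\ln n}{n})^{\nicefrac12}$ used to conclude the theorem would not yield the $\frac1n$ decay of the exponential term, so a constant $\delta$ is required here, is exactly the right reading of how the corollary departs from the theorem's final step.
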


Therefore in order to estimate a $k$-way interaction model correctly, $O(K^3\ln(K))=O\big(kp^{3k}\ln(p)\big)$ samples suffice.

\subsection{Active Setting}\label{sec:3.2}
In this setting, the experimenter decides the choice of the dosages $\bd$ sequentially in multiple rounds, where the observations from previous rounds can be used to inform the choice of dosage. Note that as discussed in Section~\ref{sec:3.1}, the first round of the active setting degenerates to the passive setting, where the optimal choice is $\bd=(\nicefrac12,\dots,\nicefrac12)$.

Consider round $T>1$. Denote $\bbD_t$ as the collected data and let $\cX_t$ be the design matrix obtained by $\bbD_t$ at round $t\leq T$. The goal is to minimize the following objective
\begin{equation}\label{eq:obj-2}
    \bbE_{\bbD_{T}} \left[\|\hat{\bbeta} -\bbeta\|_2^2\mid \bbD_1\cup \dots \bbD_T\right].
\end{equation}
%\begin{lemma}
%With the design matrix $X$ described as above, we have that
%\begin{align}
%    \norm{\frac{1}{n}X^TX - \Sigma(d)}\leq \sqrt{\frac{8M^2}{n}}t+\sqrt{\frac{18M^3}{n}}
%\end{align}
%with probability at least $1-2e^{-t^2}$ where  
%\end{lemma}
In this scenario, we can not obtain a closed form solution as the optimal choice of $\bd$ depends on pre-collected $\bbD_1\cup\dots\bbD_{T-1}$, which can be arbitrary. However, we show it is possible to derive a near-optimal objective that can be easily computed and numerically optimized.

\begin{theorem}\label{thm:4.4}
    The following choice of dosage:
    \begin{equation}\label{eq:easy-obj-act}
        \begin{aligned}
        \bd_{T} = \argmin_{\bd\in[0,1]^p}\sum_{i=1}^K\frac{1}{\lambda_i\left(\Sigma(\bd)+\frac{1}{n}\sum_{t=1}^{T-1}{\cX_t^\top\cX_t}\right)}
    \end{aligned}
    \end{equation} 
    is optimal up to a factor of $1+O\left(\frac{\ln(n)}{n}\right)$ with respect to Eq.~\eqref{eq:obj-2}.
\end{theorem}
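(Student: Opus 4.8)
The plan is to mirror the structure of the proof of Theorem~\ref{thm:4.2}, but with the empirical Gram matrix of the past rounds entering as a fixed offset. Condition on the pre-collected data $\bbD_1\cup\dots\cup\bbD_{T-1}$, so that the matrices $\cX_t^\top\cX_t$ for $t<T$ are deterministic. Write $A = \frac{1}{n}\sum_{t=1}^{T-1}\cX_t^\top\cX_t$. The only remaining randomness is in the design matrix $\cX_T$ of the current round, sampled according to $\bd=\bd_T$, together with the outcome noise $Y$. The total Gram matrix appearing in the truncated OLS estimator over all $T$ rounds is $\cX_{1:T}^\top\cX_{1:T} = \cX_T^\top\cX_T + nA$, and by Lemma~\ref{lm:4.1} the conditional error is sandwiched by $\min\{\sum_i \sigma^2/\lambda_i(\cX_T^\top\cX_T + nA),\ \cdot\}$. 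Since $\bbE[\cX_T^\top\cX_T] = n\,\Sigma(\bd)$, the natural deterministic surrogate for the Gram matrix is $n(\Sigma(\bd)+A)$, whose inverse-eigenvalue sum is exactly the acquisition function in Eq.~\eqref{eq:easy-obj-act} up to the factor $n$. This explains why minimizing Eq.~\eqref{eq:easy-obj-act} is the right target.

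The key steps, in order, are as follows. First, I would restate Lemma~\ref{lm:4.1} conditionally, replacing $\cX^\top\cX$ by $\cX_T^\top\cX_T + nA$, to get matching upper and lower bounds on Eq.~\eqref{eq:obj-2} in terms of the eigenvalues of this shifted random matrix. Second, I would establish a concentration result for $\cX_T^\top\cX_T$ around its mean $n\Sigma(\bd)$ via the same $\epsilon$-net plus Hoeffding argument invoked for Theorem~\ref{thm:4.2} (the past offset $nA$ is deterministic, so it does not affect concentration); on the high-probability event $\|\cX_T^\top\cX_T - n\Sigma(\bd)\|\le n\delta$, Weyl's inequality gives $\lambda_i(\cX_T^\top\cX_T + nA)\in[n(\lambda_i(\Sigma(\bd)+A)-\delta'),\, n(\lambda_i(\Sigma(\bd)+A)+\delta')]$ after accounting for how the eigenvalues shift under the additive offset. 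Third, I would feed these eigenvalue bounds into the sandwich from step one to show that, up to the same $1+O(\ln(n)/n)$ multiplicative slack, the conditional objective Eq.~\eqref{eq:obj-2} is controlled above and below by $\frac1n\sum_{i=1}^K 1/\lambda_i(\Sigma(\bd)+A)$, which is precisely Eq.~\eqref{eq:easy-obj-act}. Choosing $\delta=(2\ln n/n)^{1/2}$ as before converts the additive eigenvalue perturbation into the claimed multiplicative factor, so the minimizer of the surrogate is near-optimal for the true objective.

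The main obstacle I expect is handling the eigenvalue perturbation when the deterministic offset $A$ is arbitrary and possibly ill-conditioned. In Theorem~\ref{thm:4.2} the surrogate matrix is $\Sigma(\bd)$, which at the half dosage is the identity, so all eigenvalues are bounded away from zero and the inverse-eigenvalue sum is stable under $O(\delta)$ perturbations. Here $\Sigma(\bd)+A$ may have some eigenvalues close to zero if the past rounds left certain directions poorly explored, and the inverse-eigenvalue sum is highly sensitive to such small eigenvalues; an additive perturbation of size $n\delta$ to an eigenvalue of order $n\lambda$ changes its reciprocal by a factor $(1\pm\delta/\lambda)^{-1}$, which degrades the multiplicative guarantee unless $\lambda$ is bounded below. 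The fix is to convert the additive concentration $\|\cX_T^\top\cX_T - n\Sigma(\bd)\|\le n\delta$ into a relative (multiplicative) bound, i.e. to argue $\cX_T^\top\cX_T + nA \succeq (1-\epsilon)\,n(\Sigma(\bd)+A)$ and $\preceq (1+\epsilon)\,n(\Sigma(\bd)+A)$ in the Loewner order with $\epsilon=O(\delta)$, using that $A\succeq 0$ and $\Sigma(\bd)\succeq 0$ so the offset only helps the conditioning. A multiplicative spectral bound is exactly what is needed to carry the small-eigenvalue directions through without blow-up, and it transfers directly to the reciprocal-eigenvalue sum. The remaining low-probability event outside the concentration set is absorbed into the $B^2$ (respectively $\|\beta\|_2^2$) terms of Lemma~\ref{lm:4.1}, exactly as in the passive proof, contributing only an exponentially small correction that is dominated by the $O(\ln(n)/n)$ slack.
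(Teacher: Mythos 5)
Your proposal reproduces the paper's proof of Theorem~\ref{thm:4.4} essentially step for step: condition on the past rounds, set $P=\frac{1}{n}\sum_{t=1}^{T-1}\cX_t^\top\cX_t$, apply the truncated-OLS sandwich of Lemma~\ref{lm:4.1} to the cumulative Gram matrix $\cX_T^\top\cX_T+nP$, use the $\epsilon$-net/Hoeffding concentration of Lemma~\ref{lm:conc} (unaffected by the deterministic offset) together with Weyl's inequality to pass to $\lambda_i(\Sigma(\bd)+P)\mp\delta$, and choose $\delta=(2\ln n/n)^{1/2}$ to balance the ratio of the resulting upper and lower bounds, exactly as in Theorem~\ref{thm:4.2}. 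Your additional concern about small eigenvalues of $\Sigma(\bd)+P$ and the proposed Loewner-order strengthening is a refinement the paper does not carry out (it simply leaves the perturbed eigenvalues in the denominators and asserts the factor), so your attempt is the same route, if anything slightly more cautious about a point the paper glosses over.
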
 
In practice, we solve for $\bd_T$ by numerically optimizing the objective in Eq.~\eqref{eq:easy-obj-act} using the SLSQP solver in Scipy \cite{virtanen2020scipy}. The number of iterations for the optimizer to converge is roughly $O(p^3)$, and the complexity of each iteration is $O(nK^2 + K^3)$ (where the first term comes from the matrix multiplication of $\mathcal{X}^T\mathcal{X}$ and the second term comes from computing the eigenvalues of $\Sigma(\mathbf{d})$). Recall the definition of $K$ to be the number of interactions under consideration, i.e. $K = \sum_{i=0}^k {p\choose i} = O(p^k)$ for small $k$. Therefore, the overall complexity is $O(np^{3k+3}+p^{6k+3})$ for small $k$. In practice, we may recommend using a proxy, which only involves the inverse of the minimum eigenvalue: $\bf{d}_{T} = \text{argmin}_{\bf{d}\in[0,1]^p}\frac{1}{\lambda_{\min}\left(\Sigma(\bf{d})+\frac{1}{n}\sum_{t=1}^{T-1}{\mathcal{X}_t^\top\mathcal{X}_t}\right)}$. We found that numerically optimizing this was significantly faster and that the solver was consistently accurate. While the complexity computed above should be the same for this approach, in practice it takes many less iterations to converge. We summarize the procedure for the active setting in Algorithm~\ref{alg:sum_array}.
\begin{algorithm}[t]
    \caption{Active probabilistic factorial experimental design.}
    \label{alg:sum_array}
    \begin{algorithmic}[1] % The [1] adds line numbers
        \STATE Initialize $\mathbb{X}^\top\mathbb{X} = \mathbf{0}_{M\times M}$.
        \FOR{$t = 1$ to $T$}
            \IF{$t=1$} \STATE set $\bd=(\nicefrac12,\dots,\nicefrac12)$;
            \ELSE
             \STATE set $\bd_t = \argmin_{\bd\in[0,1]^p}\sum_{i=1}^K\frac{1}{\lambda_i\left(\Sigma(\bd)+\frac{1}{n}\sum_{i=1}^{t-1}{\cX_i^\top\cX_i}\right)}$.
            \ENDIF
            \STATE Gather $n$ observations according to Eq.~\eqref{eq:prob-des} and
            form design matrix $\cX_t$.
            \STATE Update $\mathbb{X}^\top\mathbb{X} \gets \mathbb{X}^\top\mathbb{X} + \frac{1}{n}\cX_t^\top\cX_t$
        \ENDFOR
        \STATE Return estimated $\bbeta$ using all observations.
    \end{algorithmic}
\end{algorithm}

%\section{Sparse Fourier Coefficients}
%Here, we relax the $k-$order interaction assumption. Instead, we just require that $$\norm{\beta}_0\leq s.$$
%Here, we will use Lasso regression to estimate $\beta$. We have the following statement about Lasso in our setting:
%\begin{lemma} Assume that $\norm{\beta}_0\leq s$. Then, carrying out Lasso with $\tau = 2\sigma\sqrt{\frac{2\log(2^{p+1}/\delta)}{n}}$ gives
%    $$\mathbb{E}\norm{\hat{\beta} - \beta}^2\leq \frac{128s(1+\log(2^{p+1}))\sigma^2}{n\kappa^2}.$$ $\kappa$ is defined as
%    $$\kappa = \inf_{\norm{S}_0\leq s}\inf_{\theta\in \mathcal{C}_S, \norm{\theta}=1} \frac{\norm{X\theta}^2}{n}$$
%    where $\mathcal{C}_s = \{\theta|\norm{\theta_{S^C}}_1\leq 3\norm{\theta_{S}}_1\}$
%\end{lemma}

\section{Extensions}\label{sec:4}

In this section, we consider several extensions and discuss how the design policy changes in different scenarios. 

\subsection{Limited Supply Constraint}
Here, we consider the case where we have additional constraint on the possible dosages $\bd$:
\begin{equation}\label{eq:lim-supp}
\sum_{i=1}^p d_i \leq L,\quad \text{for some }0<L<\frac{p}{2}.
\end{equation}
We assume $L<\frac{p}{2}$, as otherwise $\bd=(\nicefrac12,\dots,\nicefrac12)$ is feasible and therefore optimal.
This case is inspired by a setting where we have supply constraints on treatments, or where we do not want to assign a unit too many treatments at once. Note that the constraint implies that the expected number of treatments assigned to a unit is at most $L$.

In the passive setting, we derive a closed-form near-optimal dosage for the pure-additive model, i.e. $k=1$ in Assumption~\ref{ass:low-order}. This result requires understanding of the spectrum of $\Sigma(\bd)$. In the no-interaction case, we are able to derive the characteristic polynomial for $\Sigma(\bd)$, which becomes difficult when $k>1$. However, empirical results show that the result, which we now state, to hold for $k>1$ as well (see section~\ref{sec:5}).

\begin{theorem}\label{thm:4.1}
    For the additive model with $k=1$, among the dosages that satisfy the constraint in Eq.~\eqref{eq:lim-supp}, the uniform dosage $\bd$ with $d_i = \frac{L}{p}$ for all $i\in[p]$ is optimal up to a factor of $1+O\left(\frac{\ln(n)}{n}\right)$ with respect to Eq.~\eqref{eq:olsridge}.
\end{theorem}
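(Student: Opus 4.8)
The plan is to mirror the two-sided concentration argument behind Theorem~\ref{thm:4.2}. By Lemma~\ref{lm:4.1}, together with an $\epsilon$-net/Hoeffding bound on $\cX^\top\cX$ around its mean $n\,\Sigma(\bd)$, the estimation objective evaluated at any feasible $\bd$ is governed, up to a multiplicative $1+O(\nicefrac{\ln n}{n})$ factor, by the deterministic surrogate $g(\bd)=\sum_{i=1}^{K}\lambda_i(\Sigma(\bd))^{-1}=\mathrm{tr}\big(\Sigma(\bd)^{-1}\big)$. It therefore suffices to show that, over the feasible set $\{\bd\in[0,1]^p:\sum_i d_i\le L\}$, this surrogate is minimized at $d_i=L/p$. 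The only care needed in the reduction is that the candidate dosage have $\lambda_{\min}(\Sigma)$ bounded away from zero so the upper bound of Lemma~\ref{lm:4.1} applies; this holds for the uniform $L/p$ (its spectrum, computed below, is explicit and positive), while every competitor is controlled by the lower bound in Lemma~\ref{lm:4.1}.

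Next I would compute $g$ in closed form using the special structure at $k=1$. Writing $c_j=2d_j-1$ and $\mathbf{c}=(c_1,\dots,c_p)^\top$, Eq.~\eqref{eq:sigma} shows that, with rows and columns indexed by $\emptyset,\{1\},\dots,\{p\}$, the matrix takes the bordered form $\Sigma(\bd)=\big(\begin{smallmatrix}1&\mathbf{c}^\top\\ \mathbf{c}& \mathbf{c}\mathbf{c}^\top+\mathrm{diag}(1-c_j^2)\end{smallmatrix}\big)$. The Schur complement of the top-left entry is exactly the diagonal matrix $\mathrm{diag}(1-c_j^2)$, which both yields the characteristic polynomial (via the matrix-determinant lemma) and, more usefully, the block inverse. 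Reading off the trace of the inverse gives the clean identity
$$g(\bd)=\mathrm{tr}\big(\Sigma(\bd)^{-1}\big)=1+\sum_{j=1}^{p}\frac{1+c_j^2}{1-c_j^2}.$$
I regard obtaining this closed form as the crux, and the main obstacle, of the argument: a direct eigenvalue computation for non-uniform $\bd$ is unwieldy, whereas the bordered/Schur structure collapses $g$ into a separable sum over coordinates.

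Finally I would optimize the separable surrogate. Define $\varphi(d)=\frac{1+(2d-1)^2}{1-(2d-1)^2}=\frac{2d^2-2d+1}{2d(1-d)}$, so that $g(\bd)=1+\sum_j\varphi(d_j)$. A short calculation shows $\varphi$ is strictly convex on $(0,1)$ (indeed $c\mapsto\frac{1+c^2}{1-c^2}$ has positive second derivative and $c=2d-1$ is affine), symmetric about $d=\nicefrac12$, strictly decreasing on $(0,\nicefrac12)$, and blowing up at the endpoints. Hence by Jensen's inequality $\sum_j\varphi(d_j)\ge p\,\varphi(\bar d)$ with $\bar d=\tfrac1p\sum_j d_j\le L/p<\nicefrac12$, and by monotonicity $\varphi(\bar d)\ge\varphi(L/p)$; both inequalities are tight precisely when all $d_j=L/p$. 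This shows $g$ is minimized at the uniform dosage (the boundary cases $d_j\in\{0,1\}$ give $g=+\infty$ and are excluded automatically), and combined with the reduction of the first paragraph establishes the claimed near-optimality with respect to Eq.~\eqref{eq:olsridge}.
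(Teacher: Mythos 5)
Your argument is correct in its core computations and reaches the paper's conclusion by a genuinely different route. The paper observes that both terms of Eq.~\eqref{eq:olsridge} are decreasing in $\lambda_{\min}(\cX^\top\cX)$ and therefore reduces the design problem to \emph{maximizing} $\lambda_{\min}(\Sigma(\bd))$ over the feasible set; it then derives the characteristic polynomial $\det(\Sigma(\bd)-\lambda\mathbf{I}) = (1-\lambda)\prod_i(c_i-\lambda)\big[1-\tfrac{\lambda}{1-\lambda}\sum_i\tfrac{1-c_i}{c_i-\lambda}\big]$ via the matrix determinant lemma, localizes $\lambda_{\min}$ in $[0,c_1)$, computes the uniform dosage's minimum eigenvalue $\lambda^*$ in closed form, and shows every feasible competitor satisfies $g_{\bd}(\lambda^*)\le 0$ through a concavity-and-symmetry argument in $\mathbf{c}$. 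You exploit the same bordered structure of $\Sigma(\bd)$ but through its \emph{inverse} rather than its determinant: the Schur complement of the $(\emptyset,\emptyset)$ entry is indeed $\mathrm{diag}(1-c_j^2)$ (since $(\mathbf{c}\mathbf{c}^\top+D)-\mathbf{c}\mathbf{c}^\top=D$), and the block-inverse formula gives $\mathrm{tr}\big(\Sigma(\bd)^{-1}\big)=1+\mathbf{c}^\top D^{-1}\mathbf{c}+\mathrm{tr}(D^{-1})=1+\sum_j\frac{1+c_j^2}{1-c_j^2}$, which checks out. The resulting surrogate is separable, and your strict convexity, Jensen, and monotonicity steps (using $L<p/2$) correctly pin the unique minimizer at $d_j\equiv L/p$. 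This optimization step is cleaner than the paper's root-location argument, avoids any eigenvalue computation, and incidentally explains why the paper stalls at $k>1$: there the Schur complement is no longer diagonal, just as the characteristic polynomial becomes intractable.

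One mismatch to flag. The theorem is stated ``with respect to Eq.~\eqref{eq:olsridge}'', i.e., the OLS+Ridge risk bound, whose right-hand side depends on the spectrum only through $\lambda_{\min}(\cX^\top\cX)$ --- which is exactly why the paper's reduction targets $\lambda_{\min}(\Sigma(\bd))$. Your reduction instead goes through Lemma~\ref{lm:4.1} (truncated OLS) to the surrogate $\mathrm{tr}(\Sigma(\bd)^{-1})$, the right functional for the objective in Eq.~\eqref{eq:obj} but not for Eq.~\eqref{eq:olsridge}: minimizing the trace of the inverse does not in general maximize the smallest eigenvalue. Taken literally, you have therefore proved the (arguably more natural) analog of the theorem for the truncated-OLS risk, while the claim as stated still requires the $\lambda_{\min}$ maximization that the paper extracts from the characteristic polynomial; happily the uniform dosage optimizes both functionals, but your proof establishes only one of them, so you should either add the $\lambda_{\min}$ verification or restate the conclusion for Eq.~\eqref{eq:obj}. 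The remaining bookkeeping you sketch --- the $\epsilon$-net/Hoeffding concentration, Weyl's inequality, the requirement that $\lambda_{\min}(\Sigma(\bd^*))$ be bounded away from zero (note it can be of order $L/p^2$, so the hidden constants are dimension-dependent, as in the paper), and the $\min\{\cdot,\norm{\bbeta}_2^2\}$ truncation to dispose of near-singular competitors --- mirrors the proof of Theorem~\ref{thm:4.2} and goes through.
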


For non-additive models and the active setting, we note that Theorem~\ref{thm:4.4}, where the feasible region of $\bd$ is modified according to Eq.~\eqref{eq:lim-supp}, to still hold. Therefore, although no closed-form solution can be derived, we can still obtain a near-optimal solution via numerical optimization.

\subsection{Heteroskedastic Multi-round Case}
Our results can easily extend to the scenario where the noise in the outcomes varies by round. This case might be relevant when different rounds of experiments have systematic batch effects, e.g., if they are collected within different labs.

Assume that in round $t$, the variance of the observed outcome noise is $\sigma_t^2$. Note that in this setting, $\bd=(\nicefrac12,\dots,\nicefrac12)$ is still near-optimal for the first round. However, the optimal choice of dosage at round $T$ becomes
$$\bd_{T} = \argmin_{\bd\in[0,1]^p}\sum_{i=1}^K\frac{1}{\lambda_i\left(\frac{1}{\sigma_T^2}\Sigma(\bd)+\frac{1}{n}\sum_{t=1}^{T-1}{\frac{1}{\sigma_i^2}\cX_t^\top\cX_t}\right)}$$
where we now scale the observations at round $t$ by $\frac{1}{\sigma_t}$ and use the truncated OLS estimator on this modified dataset (Eq.~\eqref{eq:obj-2}), following a weighted least squares approach.

\subsection{Limited Intervention Cardinality}
Consider the scenario where the set of possible treatments that can be applied has limited cardinality:
\[
\|\mathbf{d}\|_0\leq L,~\text{for some }0<L<p.
\]
Suppose that $d_i\neq 0$ for $i\in D$, where the cardinality $|D|$ is bounded by $L$. Then it holds that $\cX_{:, S} = (-1)^{|S\setminus D|}\cX_{:, S\cap D}$. Therefore the design matrix can be written as 
$$
\cX = \cX_{D}\Gamma_D
$$
where $\cX_D$ denotes the submatrix of $\cX$ corresponding to columns $\cX_{:,S}$ with $S\subseteq D$ and $\Gamma_D$ consists of one-hot vectors as columns. In this case, we may estimate $\bbeta$ only up to $\Gamma_D \bbeta$, e.g., using the following truncated OLS estimator
$$
\begin{cases}
    (\cX_D^\top \cX_D)^{-1} \cX_D^\top Y & \text{if } \sum_{i=1}^K \lambda_i(\cX_D^\top\cX_D)^{-1}\leq \frac{B^2}{\sigma^2},\\
    \textbf{0} & \text{otherwise}.
\end{cases}
$$
Note that this has a form similar to $\hat{\bbeta}$, where using similar arguments as in Section~\ref{sec:3}, we can show that $d_i=\nicefrac12$ for $i\in D$ is near optimal. Thus, in the passive setting, the near-optimal strategy becomes selecting a subset of treatments $D$ with $|D|\leq L$ and setting $d_i=\nicefrac12$ for $i\in D$ and $d_i=0$ for $i\notin D$. As the estimator for $\Gamma_D\bbeta$ directly estimates entries $\bbeta_S$ of $\bbeta$ with $S\subseteq D$, one can select $D$ based on prior preference of which coefficients of $\bbeta$ are of interest.

\subsection{Emulating a Target Combinatorial Distribution}
We consider a different problem that explores the possibility of emulating a target distribution of combinatorial interventions with one round of probabilistic factorial design.

Formally, let $q$ be an arbitrary distribution over all possible combinatorial interventions, we are interested in approximating $q$ with choices of $\bd$. Denote $p_\bd$ as the distribution over combinatorial interventions induced by dosage $\bd$. We use KL divergence $D(q\mid\mid p_\bd)$ to measure the approximation error. To optimize over $\bd$, note that $p_\bd$ is a product distribution and we have
$$
D(q\mid\mid p_\bd) = H(q) - \sum_{i=1}^p q_i\log(d_i) - (1-q_i)\log(1-d_i),
$$
where $q_i = \sum_{\bx_i=1}q(\bx_i)$ is the marginal distribution of receiving treatment $i$ under the target distribution, and $H(\cdot)$ denotes the entropy. Minimizing this equality quickly obtains $d_i=q_i$, which indicates choosing $\bd$ based on the marginals of the target distribution. The minimal approximation error is then $$H(q)-H(q_1\otimes\dots q_p),$$ which means we can emulate a target distribution well if it is closed to a product distribution.

\section{Experiments}\label{sec:5}
We conduct experiments to validate our theoretical results, as well as show a comparison to fractional factorial design, using simulated data. We generate the outcome model $f$ by sampling the Fourier coefficients from the uniform distribution, i.e., $\bbeta\sim\mathcal{U}(-1,1)^K$. We noise the outcomes with standard Gaussian noise. In each of the following simulations, we keep $\bbeta$ constant through all iterations of each run. Further details and the code repository can be found in Appendix~\ref{app:sec-exp}.

\subsection{Comparison to Fractional Factorial Design}
Here we compare the half dosage versus a partial factorial design in the passive setting. We generate a degree-$1$ Boolean function with $p=8$. We use a $2^{8-2}$ Resolution $V$ design with $64$ samples for each approach.

The fractional design returns a mean squared error of $0.14 \pm 0.062$, where the half dosage gives $0.16 \pm 0.078$ (averaged over $300$ trials and with $\pm 1$ std).
% \begin{table}[h!]
% \centering
% \begin{tabular}{|c|c|}
% \hline
% \textbf{Fractional design} & \textbf{Half dosage} \\
% \hline
% $0.14 \pm 0.062$ & $0.16 \pm 0.078$ \\
% \hline
% \end{tabular}
% \caption{Comparison of Fractional design and Half dosage}
% \end{table}
%| Fractional design |Half dosage|
%| -------- | -------- | 
%|$.14\pm .062$| $.16\pm.078$|
With fewer samples, the careful selection of combinations will make a difference, so the fractional design can outperform the half dosage. But in many cases, especially in biological applications, careful selection of combinations is not possible which is why the much more flexible dosage design is preferable, as it enables the administration of an exponential number of combinations by choosing a linear number of dosages.

However, in the active setting, the optimal dosage can outperform a fractional design. This is discussed further in Section~\ref{sec:6.3}.

\subsection{Passive Setting}

In Theorem~\ref{thm:4.2}, we show that $\bd=(\nicefrac12,\dots,\nicefrac12)$ is optimal up to a factor of $1+O(\frac{\ln(n)}{n})$. Empirically, our validations build on the comparison of estimation error between half dosages and randomly sampled dosage vectors. We consider two different ways to generate dosages in this comparison, as described below. 

\textbf{Simulation 1.} Here, we investigate the approximation of $\hat{\beta}$ achieved by different dosages $\bd$ based on their $l_\infty$-distances from the $\mathbf{\frac{1}{2}}:=(\nicefrac12,\dots,\nicefrac12)$, i.e., $\norm{\bd - \mathbf{\frac{1}{2}}}_\infty$. We consider distances ranging from $0$ to $.4$, where we sample $100$ different dosage vectors at each distance. For each dosage, we generate $20$ sets of observations and regress on each. 

\begin{figure}[h]
    \centering
        \includegraphics[width=.8\linewidth]{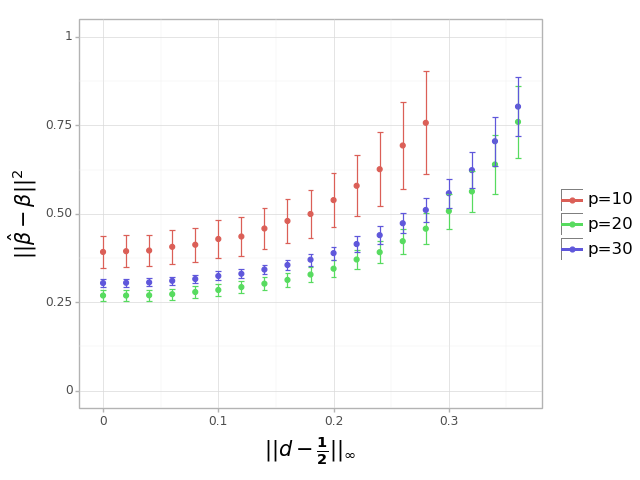}\label{fig:1}
\caption{\textbf{Simulation 1.}
Average $\|\hat{\bbeta}-\bbeta\|_2^2$ over $2000$ different observation sets generated from $100$ different dosages at each given distance. The bars correspond to $\pm .5$ std over the $2000$ observations. The curves are generated with values $p = 10, k = 2, n=200$; $p = 20, k = 2, n=1000$; and $p = 30, k = 2, n=1000$.}
\end{figure}
\begin{figure}[h!]
    \centering
        \includegraphics[width=.8\linewidth]{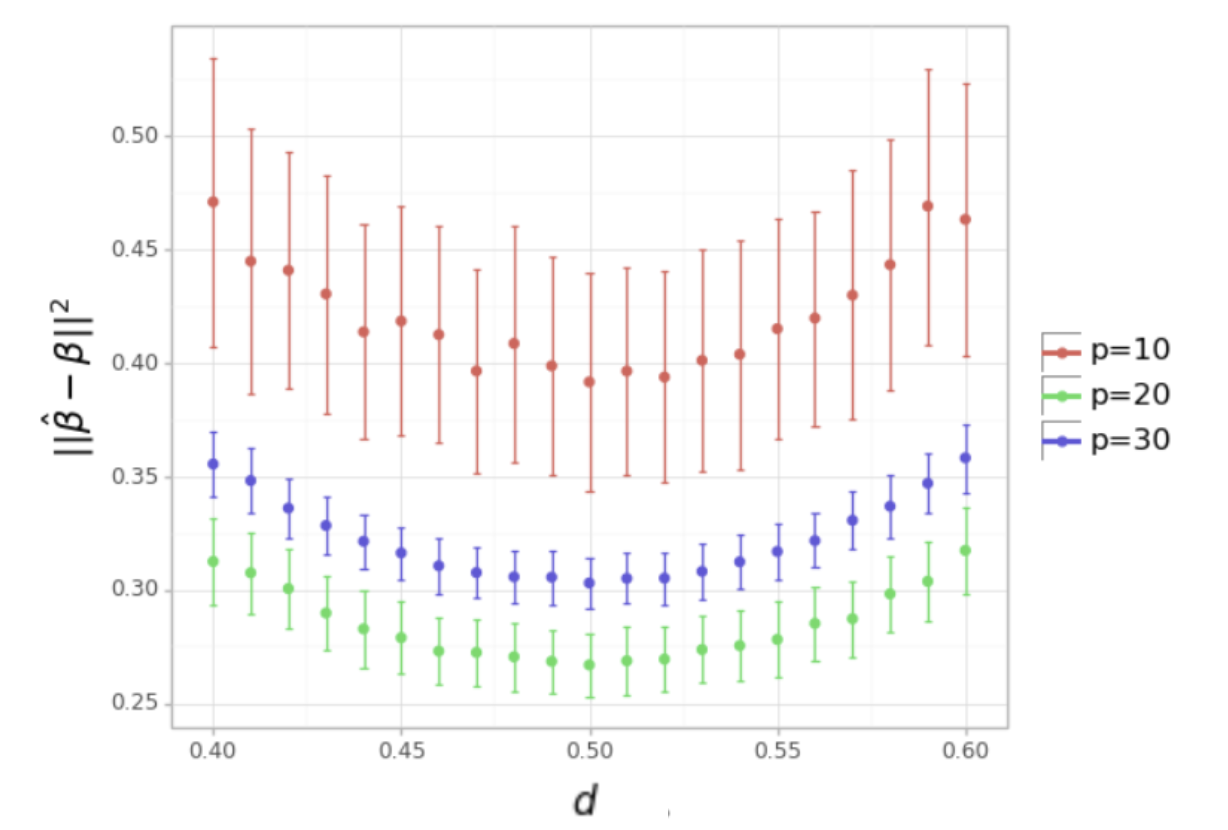}
\caption{\textbf{Simulation 2.} Approximation error of uniform dosages. Bars correspond to $\pm .5$ std over $500$ observations per dosage. The curves are generated with values $p = 10, k = 2, n=200$; $p = 20, k = 2, n=1000$; and $p = 30, k = 2, n=1000$.}\label{fig:2}
\end{figure}

We show these results for three different sets of $p, k,$ and $n$ in Figure~\ref{fig:1}. These values are chosen such that the ratio $K/n$ is kept approximately constant under different number of total treatments, following Corollary~\ref{cor:sample-complexity}: $p = 10, k = 2, n=200$; $p = 20, k = 2, n=1000$; and $p = 30, k = 2, n=1000$.

\textbf{Simulation 2.} Here, we only consider dosages where each treatment is administered at the same dosage, which we refer to as a \textit{uniform dosage}. We consider dosage values ranging from $.4$ to $.6$, and generate $500$ different observation sets for each dosage. We show the approximation error of $\hat{\bbeta}$ against the dosage value in Figure~\ref{fig:2} for the three different sets of $p, k,$ and $n$ used in simulation 1.

\textbf{Results.} In simulation 1, we see that the approximation error is generally increasing in $\norm{\bd - \bf{\frac{1}{2}}}_\infty$. Even with relatively small $n$ (on the scale of $O(K)$, rather than $\text{poly}(K)$ in Corollary~\ref{cor:sample-complexity}), we see that the half dosage seems to be optimal. In simulation $2$, we again see that the half dosage exhibits optimality, with $U-$shaped curves dipping at $.5$.

\subsection{Active Setting}\label{sec:6.3}
Here, we carry out $10$ sequential experimental rounds. We compare our proposed choice of dosage in Theorem~\ref{thm:4.4}, which we refer to as \texttt{optimal}, to two baselines. The first baseline, referred to as \texttt{random}, randomly chooses a dosage from $\mathcal{U}(0,1)^p$ at each round. The second baseline, referred to as \texttt{half}, chooses the dosage of $\mathbf{\frac{1}{2}}$ at each round. We also add a partial design baseline, referred to as \texttt{partial} (a Resolution $V$ $2^{5-1}$ design), in the small $p$ setting.

\begin{figure}[h]
    \centering   
    \vspace{.1in}
    \includegraphics[width=.8\linewidth]{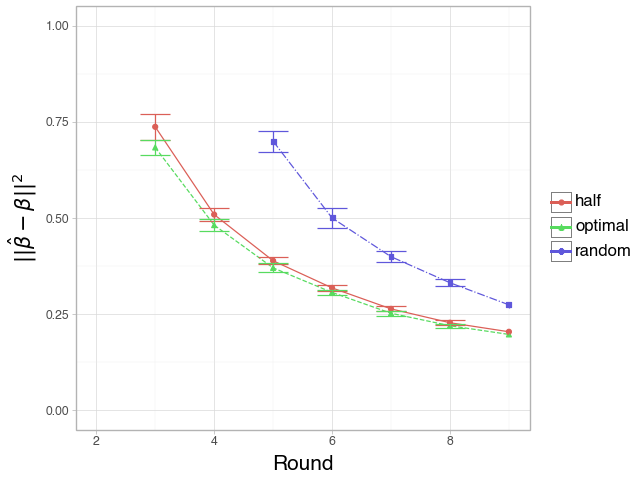}
        \caption{\textbf{Active setting with relatively large $n$}. Results are averaged over $20$ trials, where $p = 15, k=2, n = 75$. We limit the $y-$axis to $1$, focusing on later rounds when the approximation error is small. Bars correspond to $\pm .2$ std.}
        \label{fig:3}
\end{figure}

\begin{figure}[h!]
    \centering
    % \vspace{.2in}
    \includegraphics[width=.8\linewidth]{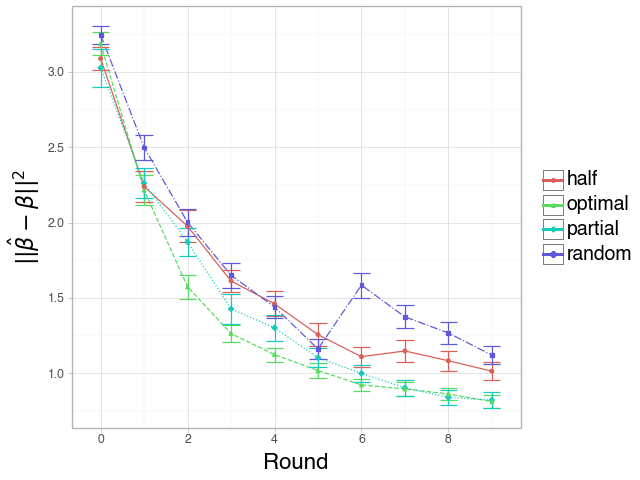}
    
    \caption{\textbf{Active setting with relative small $n$, high noise.} Results are averaged  over $50$ trials, where $p = 5, k = 1, n = 16, \sigma = 5$. Bars correspond to $\pm .1$ std.}\label{fig:4}
\end{figure}

\textbf{Results.}
We see that \texttt{random} performs consistently worse than \texttt{optimal} and \texttt{half}. For high $n$ (compared to $K$), the difference between \texttt{optimal} and \texttt{half} is marginal (as seen in Figure~\ref{fig:3}). However, when $n$ is small, there is a noticeable gap between \texttt{optimal} and \texttt{half}. In the case where there are not many samples (compared to features) per round, we find that the optimal acquisition strategy more clearly outperforms the half strategy. This is because when we have a smaller number of samples, we will need to``correct" as the distribution of combinations will be more lopsided and further away from the uniform distribution. Similarly, this is why \texttt{optimal} can outperform \texttt{partial} in a multiple-round setting, though it may be subpar in a single round. In earlier rounds, we see \texttt{optimal} performs the best, and \texttt{partial} catches up after sufficiently many rounds. Therefore, in scenarios where each round has few samples, we think it is worth computing the optimal acquisition dosage. When we have a large $n$ relative to $p$, the half strategy and optimal strategy perform very similarly. 
% While the computational complexity of finding the optimal strategy can quickly scale, in practice it only takes a matter of seconds to compute.

\subsection{Extensions}
In Theorem~\ref{thm:4.1}, we proved that the uniform dosage of $\mathbf{\frac{L}{p}}$ is optimal in the constrained case for the simple additive models. Empirically, we see that this holds for interactive models as well, both in simulations and in numerically optimizing Eq.~\eqref{eq:obj}. For example, for the pairwise interaction case, Figure \ref{fig:5} shows the approximation error versus the deviation from the suspected optimal dosage. We see that with $L=2$, $n=1000$, and varying $p=8,9,10$, the approximation error increases as we deviate from $\frac{L}{p}$. 
\begin{figure}[t]
    \centering
        \includegraphics[width=.8\linewidth]{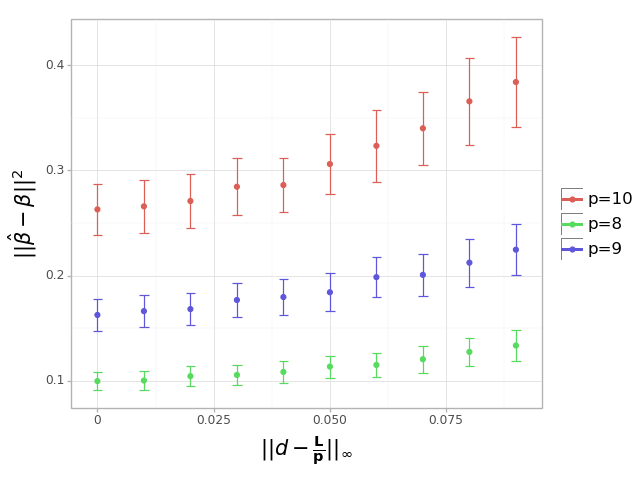}
\caption{\textbf{Limited Supply Constraint.} Here $\bd$ needs to satisfy $\sum_{i=1}^p d_i\leq 2$. The $x$-axis shows the $l_\infty$-distance from the $L/p$ uniform dosage . We vary $p$ over $8, 9, 10$, keeping $k=2$ and $n=1000$ constant. $50$ different dosages are sampled at each distance, with $40$ iterations of each. Bars are $\pm .5$ std over the $2000$ squared errors at each distance.}\label{fig:5}
\end{figure}

\subsection{Misspecified model}
In the case where we do not know the true degree of the highest-order interaction, our model may be misspecified case. While our theoretical results do not support this case, we conduct experiments that show that the half dosage still appears to be optimal in a single-round setting. Here, we use a Boolean function of full degree (with $p=5$), and vary $k$ between $2$ and $4$. So while the true function features interaction terms of all degrees, our assumption is that only terms of interaction up to $k$ appear in $f$. We fit the model under these assumed values of $k$, and observe that a half dosage appears to still lead to the lowest estimation errors in Figure \ref{fig:6}.
\begin{figure}[t]
    \centering
        \includegraphics[width=.8\linewidth]{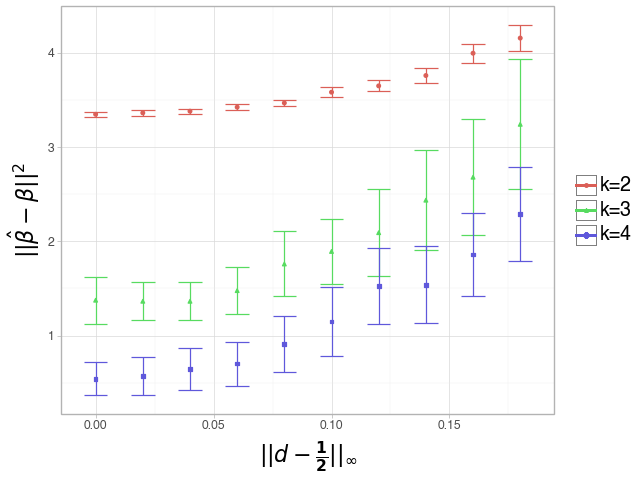}
\caption{\textbf{Misspecification.} The $x$-axis shows the $l_\infty$-distance from the half dosage. We vary $k$ from $2$ to $4$, where the true $k=5$. We use $300$, $100$, and $200$ samples, respectively. $50$ different dosages are sampled at each distance, with $20$ iterations of each. Bars are $\pm .2$ std.}\label{fig:6}
\end{figure}

\section{Discussion}

In this work, we propose and study probabilistic factorial design, a scalable and flexible approach to implementing factorial experiments, which generalizes both full and fractional factorial designs.
Within this framework, we tackle the optimal design problem, focusing on learning combinatorial intervention models using Boolean function representations with bounded-degree interactions.
We establish theoretical guarantees and near-optimal desgin strategies in both passive and active learning settings. In the passive setting, we prove that a uniform dosage of $\nicefrac12$ per treatment is near-optimal for estimating any $k$-way interaction model. In the active setting, we propose a numerically optimizable acquisition function and demonstrate its theoretical near-optimality.
Additionally, we extend our approach to account for practical constraints, including limited supply, heteroskedastic multi-round noise, and emulating target combinatorial distributions. Finally, these theoretical results are validated through simulated experiments.

\paragraph{Limitations and Future Work.} This work has several limitations and assumptions that may be interesting to address in future work. 
First, we assume a product infection mechanism in the probabilistic design. However, this assumption may not hold in certain scenarios, such as when interference or censoring effects are present. For example, in cell biology, experiments conducted on tissue samples may exhibit spatial interactions among neighboring cells. Additionally, certain treatment combinations may induce cell death, leading to a lack of observable units for those combinations.
Second, our combinatorial intervention model could be extended to incorporate unit-specific covariates. The current model assumes that outcomes are determined solely by the received treatment, which suffices for homogenous units and average effects. However, incorporating covariate-based models would enable finer-grained personalized treatment-outcome predictions.
Third, while we explore several extensions to the design problem, further investigations into alternative constraints, such as sparse interventions, and alternative objectives, such as optimizing specific outcome variables, could be valuable directions for future work.

\section*{Impact Statement}

This paper presents theoretical work whose goal is to advance the field of 
Machine Learning. There are many potential societal consequences 
of our work, none which we feel must be specifically highlighted here.

\section*{Acknowledgements}
We thank the anonymous reviewers for helpful comments. D.S. was supported by the Advanced Undergraduate Research Opportunities Program at MIT. J.Z. was partially supported by an Apple AI/ML PhD Fellowship. C.U. was partially supported by NCCIH/NIH (1DP2AT012345), ONR (N00014-22-1-2116 and N00014-24-1-2687), the United States Department of Energy (DE-SC0023187), the Eric and Wendy Schmidt Center at the Broad Institute, and a Simons Investigator Award.

\bibliography{references}
\bibliographystyle{icml2025}

%%%%%%%%%%%%%%%%%%%%%%%%%%%%%%%%%%%%%%%%%%%%%%%%%%%%%%%%%%%%%%%%%%%%%%%%%%%%%%%
%%%%%%%%%%%%%%%%%%%%%%%%%%%%%%%%%%%%%%%%%%%%%%%%%%%%%%%%%%%%%%%%%%%%%%%%%%%%%%%
% APPENDIX
%%%%%%%%%%%%%%%%%%%%%%%%%%%%%%%%%%%%%%%%%%%%%%%%%%%%%%%%%%%%%%%%%%%%%%%%%%%%%%%
%%%%%%%%%%%%%%%%%%%%%%%%%%%%%%%%%%%%%%%%%%%%%%%%%%%%%%%%%%%%%%%%%%%%%%%%%%%%%%%
\newpage
\appendix
\onecolumn
\section{Proofs for Section~\ref{sec:2}}\label{app:sec2}
\subsection{Fourier Representation}
\begin{lemma}
    $f$ admits the representation $f(\mathbf{x}) = \sum_{S\subseteq[p]}\beta_S\phi_S(\mathbf{x})$, where $\beta_S = \frac{1}{2^p}\sum_{\by\in \{-1,1\}^p} f(\by)\phi_S(\by)$.
\end{lemma}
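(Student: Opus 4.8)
The plan is to regard the set of real-valued functions on the Boolean cube $\{-1,1\}^p$ as a real vector space of dimension $2^p$ (one coordinate per point of the cube), equipped with the inner product $\langle g, h\rangle = \frac{1}{2^p}\sum_{\by\in\{-1,1\}^p} g(\by)h(\by)$, and to show that the $2^p$ parity functions $\{\phi_S\}_{S\subseteq[p]}$ form an orthonormal basis with respect to it. Once this is established, both claims follow at once: every $f$ expands uniquely in this basis, and the coefficient of $\phi_S$ in that expansion is $\langle f,\phi_S\rangle$, which is precisely the stated formula for $\beta_S$.

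First I would verify orthonormality. For $S,S'\subseteq[p]$, since $x_i^2=1$ on $\{-1,1\}$, the product $\phi_S(\bx)\phi_{S'}(\bx)=\prod_{i\in S}x_i\prod_{j\in S'}x_j$ collapses (the coordinates in $S\cap S'$ square to $1$) to $\prod_{i\in S\Delta S'}x_i=\phi_{S\Delta S'}(\bx)$. Hence $\langle\phi_S,\phi_{S'}\rangle=\frac{1}{2^p}\sum_{\by}\phi_{S\Delta S'}(\by)$, and the crux is to evaluate this parity sum. If $S=S'$ then $S\Delta S'=\emptyset$, the empty product is the constant $1$, and the sum is $\frac{1}{2^p}\cdot 2^p=1$. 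If $S\neq S'$, fix any coordinate $i\in S\Delta S'$; pairing each $\by$ with its sign-flip in coordinate $i$ makes the summands cancel, so the sum is $0$. Equivalently, under the uniform distribution the coordinates are independent with mean zero, whence $\bbE\big[\prod_{i\in S\Delta S'}y_i\big]=\prod_{i\in S\Delta S'}\bbE[y_i]=0$. This yields $\langle\phi_S,\phi_{S'}\rangle=\ind_{S=S'}$.

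Finally, I would close the argument by a dimension count. There are exactly $2^p$ subsets $S\subseteq[p]$, giving $2^p$ mutually orthonormal — and therefore linearly independent — vectors in a space of dimension $2^p$, so $\{\phi_S\}$ is a basis and a representation $f=\sum_{S\subseteq[p]}\beta_S\phi_S$ exists and is unique. Taking the inner product of both sides with $\phi_{S'}$ and using orthonormality gives $\beta_{S'}=\langle f,\phi_{S'}\rangle=\frac{1}{2^p}\sum_{\by\in\{-1,1\}^p}f(\by)\phi_{S'}(\by)$, as claimed.

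The only genuine content here is the parity-sum cancellation of paragraph two; everything else is standard linear-algebra bookkeeping. I therefore expect no serious obstacle, with the sole point requiring care being the independence/cancellation step that forces the off-diagonal inner products to vanish (and the attendant observation that $x_i^2=1$ is what makes the product of two parities again a parity).
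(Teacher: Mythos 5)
Your proof is correct, but it takes a genuinely different route from the paper's. The paper verifies the claimed representation directly: it substitutes the formula for $\beta_S$ into $\sum_{S\subseteq[p]}\beta_S\phi_S(\bx)$, exchanges the order of summation, and invokes the ``delta-function'' identity $\sum_{S\subseteq[p]}\prod_{i\in S}x_iy_i = 2^p\,\ind_{\bx=\by}$ (summing over \emph{subsets} for a fixed pair of points, which follows from factoring the sum as $\prod_{i=1}^p(1+x_iy_i)$), so the double sum collapses to $f(\bx)$. You instead fix a pair of \emph{subsets} and sum over points of the cube, establishing orthonormality $\langle\phi_S,\phi_{S'}\rangle=\ind_{S=S'}$ via the identity $\phi_S\phi_{S'}=\phi_{S\Delta S'}$ and the sign-flip cancellation, then conclude by a dimension count. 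The two identities are dual forms of the same cancellation phenomenon, so neither argument is deeper than the other, but they buy different things: the paper's computation is shorter and directly exhibits existence of the representation with the stated coefficients, while your basis argument additionally yields \emph{uniqueness} of the expansion (which the paper's verification does not explicitly address, though it follows from the same dimension count) and situates the lemma in the standard framework of Fourier analysis on the Boolean cube, where the coefficient formula $\beta_S=\langle f,\phi_S\rangle$ falls out as a general fact about orthonormal bases rather than a computation to be checked.
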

\begin{proof}
    Plugging in the value of $\beta_S$, we have
    \begin{align*}
        f(\mathbf{x})&=\sum_{S\subseteq[p]}\left(\frac{1}{2^p}\sum_{\by\in \{-1,1\}^p} f(\by)\phi_S(\by)\right)\phi_S(\mathbf{x}) \\&= \frac{1}{2^p}\sum_{S\subseteq[p]}\sum_{\by\in \{-1,1\}^p} f(\by)\prod_{i\in S}x_iy_i
        \\&= \frac{1}{2^p}\sum_{\by\in \{-1,1\}^p}f(\by)\sum_{S\subseteq[p]} \prod_{i\in S}x_iy_i\\&=\frac{1}{2^p}f(\mathbf{x})2^p\\&=f(\mathbf{x}),
    \end{align*}
\end{proof}
as $\sum_{S\subseteq[p]} \prod_{i\in S}x_iy_i = 2^p$ if $\mathbf{x}=\by$ and $0$ otherwise.
\begin{lemma}
    Consider a model on $\{-1,1\}^p$, where $\psi_S(\mathbf{x}) = 1$ iff $\mathbf{x}_i = 1$ for all $i\in S$, and $$g(\mathbf{x}) = \sum_{S\in[p]}\alpha_S\psi_S(\mathbf{x}).$$
    This model is a specific case of our model, where a low-interaction constraint on this model implies a low-interaction constraint on our model.
\end{lemma}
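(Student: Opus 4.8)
The plan is to rewrite each indicator monomial $\psi_S$ directly in the Fourier basis and then read off the coefficients $\beta_T$, from which both claims follow. The key observation is that since each coordinate satisfies $x_i\in\{-1,1\}$, the indicator equals $\ind_{x_i=1}=\tfrac{1+x_i}{2}$. Hence $\psi_S(\mathbf{x})=\prod_{i\in S}\tfrac{1+x_i}{2}$, and expanding this product over all subsets of $S$ gives $\psi_S(\mathbf{x})=2^{-|S|}\sum_{T\subseteq S}\phi_T(\mathbf{x})$, where $\phi_T(\mathbf{x})=\prod_{i\in T}x_i$ is the Fourier character. This expression already exhibits $g$ as a real-valued Boolean function, which establishes that the $\psi$-model is a special case of our model.

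Next I would substitute this expansion into $g(\mathbf{x})=\sum_S\alpha_S\psi_S(\mathbf{x})$ and interchange the order of summation, grouping terms by the character $\phi_T$. Collecting the coefficient of each $\phi_T(\mathbf{x})$ yields $\beta_T=\sum_{S\supseteq T}\tfrac{\alpha_S}{2^{|S|}}$, which matches Eq.~\eqref{eq:poly} up to a relabeling of the summation indices. The one point requiring care is the direction of inclusion in this interchange: for a fixed character index $T$, the term $\phi_T$ is produced by every $\psi_S$ with $S\supseteq T$, so one sums $\alpha_S/2^{|S|}$ over all \emph{supersets} $S$ of $T$, not over subsets.

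Finally, to obtain the degree statement, I would exploit the support structure of this coefficient formula. Assume the low-interaction constraint on $g$, i.e.\ $\alpha_S=0$ whenever $|S|>k$, and fix any $T$ with $|T|>k$. Every $S$ contributing to $\beta_T=\sum_{S\supseteq T}\tfrac{\alpha_S}{2^{|S|}}$ satisfies $S\supseteq T$ and hence $|S|\ge|T|>k$, so $\alpha_S=0$ for all such $S$ and therefore $\beta_T=0$. Thus $\beta_T=0$ for all $|T|>k$, which is exactly Assumption~\ref{ass:low-order} on the Fourier coefficients, so the low-interaction constraint transfers.

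There is no genuine analytic obstacle here; the proof is essentially a bookkeeping argument. The only thing that must be handled carefully is the direction of the set inclusions (supersets rather than subsets) in the coefficient formula, since it is precisely the resulting monotonicity $|S|\ge|T|$ that makes the degree bound carry over from $\alpha$ to $\beta$.
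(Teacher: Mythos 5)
Your proposal is correct and follows exactly the same route as the paper's proof in Appendix~\ref{app:sec2}: expanding $\psi_S(\mathbf{x})=\prod_{i\in S}\frac{1+x_i}{2}=2^{-|S|}\sum_{T\subseteq S}\phi_T(\mathbf{x})$, swapping summation order to obtain $\beta_T=\sum_{S\supseteq T}\frac{\alpha_S}{2^{|S|}}$, and noting that $S\supseteq T$ forces $|S|\geq |T|$ so the degree bound transfers. Your explicit emphasis on the superset direction of the coefficient formula is exactly the point the paper uses (tersely) in its final line, so there is nothing to add or correct.
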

\begin{proof}
    We have
    \begin{align*}
        g(\mathbf{x}) &=\sum_{S\subseteq[p]}\alpha_S\psi_S(\mathbf{x})\\&=\sum_{S\subseteq[p]}\alpha_S\prod_{i\in S}\frac{(x_i+1)}{2}\\&=\sum_{S\subseteq[p]}\frac{1}{2^{|S|}}\alpha_S\sum_{T\subseteq S}\prod_{i\in T}x_i\\&=\sum_{T\subseteq[p]}
\left(\sum_{S\supseteq T}\frac{\alpha_S}{2^{|S|}}\right)\phi_T(\mathbf{x}).
\end{align*}
Therefore, 
$$g(\mathbf{x}) = \sum_{S\subseteq[p]}\beta_S\phi_S(\mathbf{x}),$$
where
$$\beta_S = \sum_{T\supseteq S}\frac{\alpha_T}{2^{|T|}}.$$
Note that $\alpha_S = 0$ for all $|S|>k$ implies that $\beta_S = 0$ for all $|S|>k$.

\end{proof}

\section{Proofs for Section~\ref{sec:3}}\label{app:sec3}
\subsection{Properties of $\Sigma(d)$}

\begin{lemma}\label{lm:b1}
    Let $\Sigma(\bd) = \mathbb{E} \phi(\mathbf{x})^T \phi(\mathbf{x})$, where $\phi(\mathbf{x})$ is the row vector composed of $\phi_S(\mathbf{x})$ for all $S$ with $|S|\leq k$ and $\bx$ is distributed according to the dosage $\bd$. Then the minimum eigenvalue of $\Sigma(\bd)$ is at most $1$, with equality iff $\bd = \frac{1}{2}\mathbf{1}_p$. 
\end{lemma}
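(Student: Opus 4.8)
The plan is to reduce the whole statement to a trace-averaging argument. First I would record the structural facts about $\Sigma(\bd)$ that I will use. Since $\Sigma(\bd)=\bbE\,\phi(\bx)^\top\phi(\bx)$ is an average of rank-one outer products, it is symmetric and positive semidefinite, so all its eigenvalues are nonnegative. Next I would observe that its diagonal entries are all equal to $1$: either directly, since $\Sigma(\bd)_{S,S}=\bbE[\phi_S(\bx)^2]=\bbE[1]=1$ because each $x_i\in\{-1,1\}$, or by plugging $S=S'$ into the closed form of Eq.~\eqref{eq:sigma}, where $S\Delta S=\emptyset$ makes the product empty. Hence $\operatorname{tr}(\Sigma(\bd))=K$ for every $\bd$.

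The first claim is then immediate. The $K$ eigenvalues of $\Sigma(\bd)$ sum to $\operatorname{tr}(\Sigma(\bd))=K$, so their average is $1$, and the minimum eigenvalue is at most the average; thus $\lambda_{\min}(\Sigma(\bd))\le 1$. For the equality case I would argue as follows: if $\lambda_{\min}(\Sigma(\bd))=1$, then combining $\lambda_i\ge\lambda_{\min}=1$ for all $i$ with $\sum_{i=1}^K\lambda_i=K$ forces every eigenvalue to equal $1$. A symmetric matrix whose eigenvalues are all $1$ is the identity, so $\Sigma(\bd)=\mathbf{I}_K$.

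It therefore remains to show $\Sigma(\bd)=\mathbf{I}_K$ if and only if $\bd=\tfrac12\mathbf{1}_p$, which I would extract from the off-diagonal formula in Eq.~\eqref{eq:sigma}. For the ``if'' direction, $d_i=\tfrac12$ gives $2d_i-1=0$, so every off-diagonal entry $\prod_{i\in S\Delta S'}(2d_i-1)$ with $S\ne S'$ vanishes, leaving $\Sigma(\bd)=\mathbf{I}_K$. For the ``only if'' direction, assume $\Sigma(\bd)=\mathbf{I}_K$ and test the entries indexed by $S=\emptyset$ and $S'=\{i\}$ (both present in the index set whenever $k\ge 1$): here $S\Delta S'=\{i\}$, so $\Sigma(\bd)_{\emptyset,\{i\}}=2d_i-1$ must equal $0$, giving $d_i=\tfrac12$ for every $i\in[p]$.

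There is no serious obstacle here; the content is entirely the trace/averaging identity plus reading off the single-coordinate off-diagonal entries. The only points requiring care are justifying nonnegativity of the eigenvalues (so that $\lambda_i\ge 1$ and $\sum_i\lambda_i=K$ can be combined to force equality) and noting the mild requirement $k\ge 1$ so that the $(\emptyset,\{i\})$ entries actually exist, which is what pins down each $d_i$ in the equality case.
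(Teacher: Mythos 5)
Your proof is correct and follows essentially the same route as the paper's: positive semidefiniteness plus the constant-$1$ diagonal forces trace $K$, the averaging argument caps $\lambda_{\min}$ at $1$ with equality only when $\Sigma(\bd)=\mathbf{I}_K$, and the $(\emptyset,\{i\})$ entries $2d_i-1$ pin down $\bd=\tfrac12\mathbf{1}_p$. Your write-up is in fact slightly more careful than the paper's, spelling out both directions of the equivalence and the mild $k\ge 1$ requirement.
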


\begin{proof}
    First note that $\Sigma(\bd)$ is given by
    $$\Sigma(\bd)_{S, S'} = \prod_{i\in S\Delta S'}(2d_i - 1)$$ Therefore, $\Sigma$ is symmetric with diagonal elements equal to $1$. In addition, $\Sigma(\bd)$ is positive semidefinite and hence has real, non-negative eigenvalues. Combined with the fact that the trace of $\Sigma(\bd)$ is $M$, the mean of the eigenvalues must be $1$. Therefore, the minimum eigenvalue is equal to $1$ if and only if all the eigenvalues are equal to $1$. A real symmetric matrix has a spectrum of only $1$'s if and only if it is the identity. Noting that $\Sigma(\bd)_{\emptyset, \{i\}} = 2d_i - 1$, $\Sigma(\bd) = \mathbf{I}_K$ if and only if $d_i = \frac{1}{2}$, concluding the proof.
\end{proof}
\begin{lemma}\label{lm:b2}
    With $\Sigma(\bd)$ defined as above, we have $\lambda_{\min}(\Sigma(\bd))\leq \min_{i}(1-|2d_i-1|)$.
\end{lemma}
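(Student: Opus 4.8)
The plan is to upper-bound $\lambda_{\min}(\Sigma(\bd))$ via the Rayleigh-quotient (variational) characterization
$$\lambda_{\min}(\Sigma(\bd)) = \min_{v\neq 0}\frac{v^\top\Sigma(\bd)\,v}{v^\top v},$$
which means that exhibiting any single nonzero test vector $v$ already gives $\lambda_{\min}(\Sigma(\bd))\le v^\top\Sigma(\bd)\,v/\|v\|_2^2$. The idea is to construct, for each coordinate $i\in[p]$, a cleverly chosen $v$ that realizes exactly the value $1-|2d_i-1|$, and then take the minimum over $i$.

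First I would fix an index $i$ and restrict attention to the two Fourier coordinates indexed by $S=\emptyset$ and $S'=\{i\}$; both are legitimate columns of $\Sigma(\bd)$ since $|\{i\}|=1\le k$. From the formula $\Sigma(\bd)_{S,S'}=\prod_{j\in S\Delta S'}(2d_j-1)$ established in Lemma~\ref{lm:b1}, the corresponding $2\times 2$ principal submatrix is $\left(\begin{smallmatrix} 1 & 2d_i-1 \\ 2d_i-1 & 1\end{smallmatrix}\right)$, whose eigenvalues are $1\pm(2d_i-1)$ and hence whose smallest eigenvalue is $1-|2d_i-1|$. I would then take the test vector $v$ supported on these two coordinates, with $v_\emptyset=1$, $v_{\{i\}}=-\,\mathrm{sign}(2d_i-1)$, and zeros elsewhere. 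A direct computation gives $v^\top\Sigma(\bd)\,v = 2-2|2d_i-1|$ and $\|v\|_2^2=2$, so the Rayleigh quotient equals $1-|2d_i-1|$, yielding $\lambda_{\min}(\Sigma(\bd))\le 1-|2d_i-1|$. Since $i$ was arbitrary, taking the minimum over $i$ establishes the claim.

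There is essentially no obstacle here: the result is a one-line estimate once the right test vector is written down. Equivalently, one can phrase the argument through Cauchy eigenvalue interlacing, since $\lambda_{\min}$ of a symmetric matrix is at most $\lambda_{\min}$ of any principal submatrix, in particular the $2\times 2$ block above. The only minor points to keep track of are (i) ensuring the singleton index $\{i\}$ lies in the coordinate set, which holds because $k\ge 1$, and (ii) choosing the sign of the second entry of $v$ so that the off-diagonal term is subtracted rather than added; using $\mathrm{sign}(2d_i-1)$ handles both signs uniformly, and the degenerate case $d_i=\tfrac12$ is trivial because the bound reads $\lambda_{\min}\le 1$, which already follows from Lemma~\ref{lm:b1}.
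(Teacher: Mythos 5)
Your proof is correct and rests on the same key observation as the paper's: the $2\times 2$ principal submatrix of $\Sigma(\bd)$ indexed by $\emptyset$ and $\{i^*\}$ (where $i^*$ minimizes $1-|2d_i-1|$), whose smallest eigenvalue is exactly $1-|2d_{i^*}-1|$. The difference is only in execution --- you conclude directly via the Rayleigh quotient with an explicit test vector (equivalently, Cauchy interlacing), whereas the paper argues by contradiction, noting that $\lambda_{\min}(\Sigma(\bd)) > c^*$ would make $\Sigma(\bd)-c^*\mathbf{I}_K$ positive definite even though that same $2\times 2$ block of $\Sigma(\bd)-c^*\mathbf{I}_K$ has zero determinant; your handling of the degenerate case $d_i=\tfrac12$ via Lemma~\ref{lm:b1} is also sound.
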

\begin{proof}
    We proceed with proof by contradiction. Let $c^* = \min_{i}(1-|2d_i-1|)$ and $i^* = \argmin_i (1-|2d_i-1|)$. If $\lambda_{\min}(\Sigma(\bd)) > c^*$, then $\Sigma(\bd) - c^*\mathbf{I}_K$ is positive definite because $\Sigma(\bd)$ is positive semidefinite. Therefore, all leading principal minors of $\Sigma(\bd) -c^*\mathbf{I}_K$ must have positive determinants.  Consider the $2\times 2$ submatrix defined by the rows/columns corresponding to $\emptyset$ and $\{i^*\}$). In $\Sigma-c^*\mathbf{I}_K$, this is $\begin{bmatrix}
        |2d_i^*-1| & 2d_i^*-1 \\2d_i^*-1 & |2d_i^*-1|
    \end{bmatrix}$, which has determinant $0$. Note that this submatrix is a principal minor in a permuted version of $\Sigma-c^*I$, which is also positive definite. Therefore, we have a contradiction as $\Sigma-c^*\mathbf{I}_K$ is not positive definite, and hence $\lambda_{\min}(\Sigma(\bd)) \leq \min_{i}(1-|2d_i-1|)$.
\end{proof}

\subsection{Proof of Lemma \ref{lm:4.1}}

\begin{lemma}[Truncated OLS]
   Given a fixed design matrix $\cX$, the truncated OLS estimator satisfies the following property:
   \[
   \bbE_{Y}\left[\|\hat{\bbeta} - \bbeta\|_2^2\right] =
   \begin{cases}
       \sum_{i=1}^K \frac{\sigma^2}{\lambda_i(\cX^\top\cX)}, & \text{if }\sum_{i=1}^K\frac{1}{\lambda_i(\cX^\top\cX)}\leq \frac{B^2}{\sigma^2},\\
       \|\bbeta\|_2^2, &  \text{otherwise}.
   \end{cases}
   \]
   In particular, there is $\min\{\sum_{i=1}^K \frac{\sigma^2}{\lambda_i(\cX^\top\cX)}, \|\bbeta\|_2^2\}\leq \bbE\big[\|\hat{\bbeta} - \bbeta\|_2^2\big]\leq \min\{\sum_{i=1}^K \frac{\sigma^2}{\lambda_i(\cX^\top\cX)}, B^2\}$.
\end{lemma}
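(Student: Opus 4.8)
The plan is to condition on the fixed design matrix $\cX$ and exploit the fact that the truncation rule depends only on $\cX$, never on $Y$. Since the eigenvalues $\lambda_i(\cX^\top\cX)$ are functions of $\cX$ alone, the indicator deciding between the OLS formula and the null estimator is deterministic once $\cX$ is fixed. This lets me split the analysis into two non-overlapping, $Y$-independent cases and compute $\bbE_Y[\|\hat{\bbeta}-\bbeta\|_2^2]$ exactly in each, rather than having to integrate across the truncation boundary.

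First I would treat the case $\sum_{i=1}^K \lambda_i(\cX^\top\cX)^{-1}\leq B^2/\sigma^2$, where $\hat{\bbeta}=(\cX^\top\cX)^{-1}\cX^\top Y$. Under the model $Y=\cX\bbeta+\epsilon$ with $\epsilon\sim\mathcal{N}(0,\sigma^2\mathbf{I}_n)$ — which is exact because Assumption~\ref{ass:low-order} forces every nonzero coefficient to sit among the $|S|\le k$ columns of $\cX$ — the estimator is unbiased and $\hat{\bbeta}-\bbeta=(\cX^\top\cX)^{-1}\cX^\top\epsilon$. Taking expectations and using $\bbE[\epsilon\epsilon^\top]=\sigma^2\mathbf{I}_n$ together with the cyclic trace identity $\mathrm{tr}(\cX(\cX^\top\cX)^{-2}\cX^\top)=\mathrm{tr}((\cX^\top\cX)^{-1})$ gives
\[
\bbE_Y\big[\|\hat{\bbeta}-\bbeta\|_2^2\big] = \sigma^2\,\mathrm{tr}\big((\cX^\top\cX)^{-1}\big) = \sigma^2\sum_{i=1}^K\frac{1}{\lambda_i(\cX^\top\cX)}.
\]
In the complementary case the condition fails, so $\hat{\bbeta}=\mathbf{0}$ deterministically and $\|\hat{\bbeta}-\bbeta\|_2^2=\|\bbeta\|_2^2$ carries no remaining randomness, whence $\bbE_Y[\|\hat{\bbeta}-\bbeta\|_2^2]=\|\bbeta\|_2^2$. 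This establishes the exact two-case formula.

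To obtain the stated sandwich, I would read each case against the truncation threshold. In the first case $\sigma^2\sum_i\lambda_i^{-1}\le B^2$, so the exact value $\sigma^2\sum_i\lambda_i^{-1}$ coincides with the upper-bound minimum $\min\{\sigma^2\sum_i\lambda_i^{-1},B^2\}$ and dominates the lower-bound minimum $\min\{\sigma^2\sum_i\lambda_i^{-1},\|\bbeta\|_2^2\}$. In the second case $\sigma^2\sum_i\lambda_i^{-1}>B^2\ge\|\bbeta\|_2^2$, the last inequality being the boundedness assumption $\|\bbeta\|_2\le B$; hence the upper-bound minimum equals $B^2\ge\|\bbeta\|_2^2$ while the lower-bound minimum equals $\|\bbeta\|_2^2$, both pinching the exact value $\|\bbeta\|_2^2$. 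Combining the two cases yields Eq.~\eqref{eq:trunc-OLS-bound}.

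There is no genuinely hard step here; the one thing to get right is the observation that makes the argument go through — that the branch selection is measurable with respect to $\cX$ alone, so conditioning on $\cX$ renders the truncation deterministic and the two cases can be analyzed separately. The remaining pieces, namely the unbiasedness-plus-variance computation for OLS and the trace manipulation, are entirely routine.
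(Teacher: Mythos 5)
Your proof is correct and follows essentially the same route as the paper's: compute the OLS error as $\sigma^2\,\mathrm{tr}\big((\cX^\top\cX)^{-1}\big)=\sigma^2\sum_{i=1}^K \lambda_i(\cX^\top\cX)^{-1}$ via the trace/eigenvalue manipulation, note the null estimator incurs squared error $\|\bbeta\|_2^2$ in the complementary branch, and read the sandwich off the truncation threshold together with $\|\bbeta\|_2\le B$. Your explicit remark that the branch selection is measurable with respect to $\cX$ alone (so the two cases are deterministic given the fixed design) is a point the paper leaves implicit, but it is the same argument.
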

\begin{proof}
We utilize the eigen-decomposition $UDU^T$ of $\cX^T\cX$. We have
\begin{align*}
    \mathbb{E}\left[\norm{\hat{\bbeta}^{OLS}-\bbeta}^2\right]&=\mathbb{E}\left[\norm{(\cX^T\cX)^{-1}\cX^T\epsilon}^2\right]\\&=\mathbb{E}\left[\epsilon^T\cX(\cX^T\cX)^{-1}(\cX^T\cX)^{-1}\cX^T\epsilon\right]\\&=\mathbb{E}\left[\text{tr}(\epsilon^T\cX(\cX^T\cX)^{-1}(\cX^T\cX)^{-1}\cX^T\epsilon)\right]\\&=
\text{tr}\left[\mathbb{E}\left[\epsilon\epsilon^T\right]\cX(\cX^T\cX)^{-1}(\cX^T\cX)^{-1}\cX^T\right]\\&\leq \sigma^2\text{tr}\left[\cX(\cX^T\cX)^{-1}(\cX^T\cX)^{-1}\cX^T\right]\\&=\sigma^2\sum_{i=1}^K \frac{1}{\lambda_i(\cX^T\cX)}
\end{align*}
Therefore, if $\sum_{i=1}^K\frac{1}{\lambda_i(\cX^T\cX)}\leq \frac{B^2}{\sigma^2}$, we use the OLS estimator which has an MSE of $\sum_{i=1}^K\frac{\sigma^2}{\lambda_i(\cX^T\cX)}$. Otherwise, if $\sum_{i=1}^K\frac{1}{\lambda_i(\cX^T\cX)} > \frac{B^2}{\sigma^2}$, our estimator is $\mathbf{0}_K$ which has a squared error of $\norm{\beta}_2^2\leq B^2$. This gives the desired result.
    
\end{proof}

\begin{lemma} [OLS+Ridge estimator] Given a fixed $n\times K$ design matrix  $\cX$, the OLS+Ridge estimator is defined by
\[\hat{\bbeta} = \begin{cases}
    \hat{\bbeta}^{OLS} & \frac{1}{\lambda_{\min}(\cX^T\cX)}\leq \frac{B^2n}{B^2\lambda_{\min}(\cX^T\cX)+Kn\sigma^2},\\
    \ridge & \text{otherwise}.
\end{cases}\]
and satisfies
\begin{equation}\label{eq:olsridge}
    \bbE_{Y}\left[\|\hat{\bbeta} - \bbeta\|_2^2\right] \leq \min\left(\frac{K\sigma^2}{\lambda_{\min}(\cX^T\cX)}, \frac{B^2Kn\sigma^2}{B^2\lambda_{\min}(\cX^T\cX)^2+Kn\sigma^2}\right).
\end{equation}
    
\end{lemma}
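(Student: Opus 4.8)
The plan is to fix the ridge penalty explicitly, compute the mean squared error of $\ridge$ exactly via a bias--variance decomposition, control each piece through the minimum eigenvalue of $A:=\cX^\top\cX$, and then optimize the penalty. Writing $\ridge=(A+\gamma\mathbf{I}_K)^{-1}\cX^\top Y$ for a penalty $\gamma>0$ to be chosen and substituting $Y=\cX\bbeta+\epsilon$, the estimator decomposes into a deterministic part $-\gamma(A+\gamma\mathbf{I}_K)^{-1}\bbeta$ and a mean-zero noise part $(A+\gamma\mathbf{I}_K)^{-1}\cX^\top\epsilon$. Taking $\bbE_Y$ and using $\bbE[\epsilon\epsilon^\top]=\sigma^2\mathbf{I}_n$ gives
\begin{equation*}
\bbE_Y\!\left[\norm{\ridge-\bbeta}_2^2\right]=\gamma^2\bbeta^\top(A+\gamma\mathbf{I}_K)^{-2}\bbeta+\sigma^2\,\mathrm{tr}\!\left[A(A+\gamma\mathbf{I}_K)^{-2}\right].
\end{equation*}
In the eigenbasis of $A$, with eigenvalues $\lambda_1,\dots,\lambda_K\ge\lambda_{\min}$ and coordinates $b_i$ of $\bbeta$, this is $\gamma^2\sum_i b_i^2/(\lambda_i+\gamma)^2+\sigma^2\sum_i\lambda_i/(\lambda_i+\gamma)^2$, a clean separation of bias and variance.

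Next I would bound both sums by replacing $\lambda_i$ with $\lambda_{\min}$ in every denominator. For the bias, $\sum_i b_i^2=\norm{\bbeta}_2^2\le B^2$ yields $\gamma^2B^2/(\lambda_{\min}+\gamma)^2$. For the variance, the decisive structural fact is that $\cX$ has $\pm1$ entries, so $\mathrm{tr}(A)=\sum_{m,S}\cX_{m,S}^2=nK$; hence $\sum_i\lambda_i/(\lambda_i+\gamma)^2\le\mathrm{tr}(A)/(\lambda_{\min}+\gamma)^2=nK/(\lambda_{\min}+\gamma)^2$. Combining,
\begin{equation*}
\bbE_Y\!\left[\norm{\ridge-\bbeta}_2^2\right]\le\frac{\gamma^2B^2+nK\sigma^2}{(\lambda_{\min}+\gamma)^2}.
\end{equation*}
Minimizing the right-hand side over $\gamma\ge0$ is a one-variable calculus problem whose stationary condition is $\gamma B^2\lambda_{\min}=nK\sigma^2$; the minimizer $\gamma^\star=nK\sigma^2/(B^2\lambda_{\min})$ substituted back collapses the expression to exactly $B^2nK\sigma^2/(B^2\lambda_{\min}^2+nK\sigma^2)$, the second term in Eq.~\eqref{eq:olsridge}.

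Finally I would dispatch the OLS branch and the case split. As in the truncated-OLS computation of Lemma~\ref{lm:4.1}, $\bbE_Y[\norm{\hat\bbeta^{OLS}-\bbeta}_2^2]=\sigma^2\sum_i\lambda_i^{-1}\le K\sigma^2/\lambda_{\min}$, the first term of the minimum. The threshold in the definition of $\hat\bbeta$ is simply the crossover of these two upper bounds: the inequality $K\sigma^2/\lambda_{\min}\le B^2nK\sigma^2/(B^2\lambda_{\min}^2+nK\sigma^2)$ rearranges into the stated condition on $1/\lambda_{\min}$, so the estimator always returns whichever branch carries the smaller guarantee, establishing the $\min$. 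I expect the main obstacle to be obtaining the variance bound in exactly the right form: the naive pointwise bound $\lambda_i/(\lambda_i+\gamma)^2\le 1/(4\gamma)$ destroys the $n$-dependence, and it is the trace identity $\mathrm{tr}(A)=nK$ — specific to the $\pm1$ Fourier design — that supplies the factor $nK\sigma^2$ needed to make $\gamma^\star$, the final bound, and the case-split threshold all line up consistently.
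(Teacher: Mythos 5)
Your proposal is correct and follows essentially the same route as the paper's proof: a bias--variance decomposition of the ridge estimator, bounding the squared bias by $\gamma^2 B^2/(\lambda_{\min}(\cX^\top\cX)+\gamma)^2$ via the operator norm and the variance by $\sigma^2\,\mathrm{tr}(\cX^\top\cX)/(\lambda_{\min}(\cX^\top\cX)+\gamma)^2$ via the eigendecomposition, then optimizing the penalty; the paper's choice $\gamma = \sigma^2\,\mathrm{tr}(\cX^\top\cX)/(B^2\lambda_{\min}(\cX^\top\cX))$ coincides with your $\gamma^\star = nK\sigma^2/(B^2\lambda_{\min}(\cX^\top\cX))$, since $\mathrm{tr}(\cX^\top\cX)=nK$ for the $\pm 1$ design --- an identity the paper uses only implicitly when matching its trace-form bound to the lemma statement, and which you rightly identify as the step that makes $\gamma^\star$, the final bound, and the threshold line up. Two places where you are in fact more careful than the paper: first, the paper's proof bounds each branch separately and never verifies the case split at all, whereas you check that the threshold is the crossover of the two guarantees, which is exactly what is needed for the $\min$ in Eq.~\eqref{eq:olsridge}; second, your crossover computation is right, but note that it rearranges to $\frac{1}{\lambda_{\min}(\cX^\top\cX)}\leq \frac{B^2 n}{B^2\lambda_{\min}(\cX^\top\cX)^2+Kn\sigma^2}$ with $\lambda_{\min}$ \emph{squared} in the denominator, whereas the threshold printed in the lemma has $B^2\lambda_{\min}(\cX^\top\cX)$ to the first power --- so your claim that the inequality ``rearranges into the stated condition'' is not literally true; rather, your derivation exposes what appears to be a typo in the lemma's threshold, and the squared version is the one under which the estimator always selects the branch with the smaller guarantee.
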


\begin{proof}
The bound on OLS follows easily from the proof of Proposition B.3, where we have that $$\sigma^2\sum_{i=1}^K \frac{1}{\lambda_i(\cX^T\cX)}\leq \frac{K\sigma^2}{\lambda_{\min}(\cX^T\cX)}.$$ 
Now, we analyze the ridge estimator. Recall the definition:
$$\ridge = (\cX^T\cX+\lambda \mathbf{I}_K)^{-1}\cX^Ty$$
where $\lambda$ is a chosen regularization parameter.
The bias-variance decomposition gives us
    $$\mathbb{E}\left[\norm{\ridge - \bbeta}^2\right] = \norm{\mathbb{E}\left[\ridge\right]-\bbeta}^2+\mathbb{E}\left[\norm{\ridge-\mathbb{E}\left[\ridge\right]}^2\right].$$

We analyze each term separately. For the bias term, we have
\begin{align*}
    \mathbb{E}\left[\ridge - \bbeta\right] &= ((\cX^T\cX+\lambda \mathbf{I}_K)^{-1}\cX^T\cX-\mathbf{I}_K)\bbeta\\&= -\lambda(\cX^T\cX+\lambda \mathbf{I}_K)^{-1}\bbeta
\end{align*}
so that
\begin{align*}
    \norm{\mathbb{E}\left[\ridge - \bbeta\right]}^2 &= \lambda^2\bbeta^T(\cX^T\cX+\lambda \mathbf{I}_K)^{-1}(\cX^T\cX+\lambda \mathbf{I}_K)^{-1}\bbeta\\&\leq \lambda^2\norm{\bbeta}^2\max_{\norm{x}=1}\norm{(\cX^T\cX+\lambda \mathbf{I}_K)^{-1}x}^2\\&=
    \lambda^2\norm{\bbeta}^2\lambda_{\max}((\cX^T\cX+\lambda \mathbf{I}_K)^{-1}(\cX^T\cX+\lambda \mathbf{I}_K)^{-1})
    \\&=\frac{\lambda^2\norm{\bbeta}^2}{(\lambda_{\min}(\cX^T\cX)+\lambda)^2}.
\end{align*}
Now for the variance, we have (where once again, we use the eigen-decomposition $\cX^T\cX = UDU^T$)
\begin{align*}
    \mathbb{E}\left[\norm{\ridge-\mathbb{E}\left[\ridge\right]}^2\right]&=\mathbb{E}\left[\norm{(\cX^T\cX+\lambda \mathbf{I}_K)^{-1}\cX^T\epsilon}^2\right]\\&=\mathbb{E}\left[\epsilon^T\cX(\cX^T\cX+\lambda \mathbf{I}_K)^{-1}(\cX^T\cX+\lambda \mathbf{I}_K)^{-1}\cX^T\epsilon\right]\\&=\mathbb{E}\left[\text{tr}(\epsilon^T\cX(\cX^T\cX+\lambda \mathbf{I}_K)^{-1}(\cX^T\cX+\lambda \mathbf{I}_K)^{-1}\cX^T\epsilon)\right]\\&=
    \text{tr}\left[\mathbb{E}\left[\epsilon\epsilon^T\right]\cX(\cX^T\cX+\lambda \mathbf{I}_K)^{-1}(\cX^T\cX+\lambda \mathbf{I}_K)^{-1}\cX^T\right]\\&\leq \sigma^2\text{tr}\left[\cX(\cX^T\cX+\lambda \mathbf{I}_K)^{-1}(\cX^T\cX+\lambda \mathbf{I}_K)^{-1}\cX^T\right]\\&=\sigma^2\text{tr}\left[UD(D+\lambda \mathbf{I}_K)^{-1}(D+\lambda \mathbf{I}_K)^{-1}U^T\right]\\&=\sigma^2\text{tr}\left[D(D+\lambda \mathbf{I}_K)^{-1}(D+\lambda \mathbf{I}_K)^{-1}\right]\\&=\sigma^2\sum_{i=1}^K \frac{\lambda_i(\cX^T\cX)}{(\lambda_i(\cX^T\cX)+\lambda)^2}\\&\leq \frac{\sigma^2\text{tr}(\cX^T\cX)}{(\lambda_{\min}(\cX^T\cX)+\lambda)^2}.
\end{align*}
With the knowledge that $\norm{\beta}_2^2\leq B^2$, we choose $$\lambda = \frac{\sigma^2\text{tr}(\cX^T\cX)}{B^2\lambda_{\min}(\cX^T\cX)}$$ which gives us an overall bound of 
$$\frac{B^2\sigma^2\text{tr}(\cX^T\cX)}{B^2\lambda_{\min}(\cX^T\cX)^2+\sigma^2\text{tr}(\cX^T\cX)}.$$
\end{proof}

\begin{lemma}[Concentration of $\frac{1}{n}\cX^T\cX$]\label{lm:conc} Let $\cX$ be the (random) design matrix generated by dosage $\bd$. Then 
    \begin{align*}
       \mathbb{P}\left(\norm{\frac{1}{n}\cX^T\cX - \Sigma(\bd)}\leq t\right)\geq 1-2\exp\left(K\ln 9-\frac{nt^2}{8K^2}\right)
    \end{align*}
    where the first norm is the spectral norm, and $\Sigma(\bd)$ is defined as in Lemma \ref{lm:b1}.
\end{lemma}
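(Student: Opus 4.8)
The plan is to reduce the spectral-norm concentration of the random matrix $A := \frac1n\cX^\top\cX - \Sigma(\bd)$ to a scalar concentration statement via a covering (\,$\epsilon$-net) argument, and then apply Hoeffding's inequality along each fixed test direction. The genuine difficulty is that $\frac1n\cX^\top\cX$ is a sum of \emph{dependent} rank-one pieces $\phi(\bx_m)^\top\phi(\bx_m)$: the columns of $\cX$ are correlated through the shared coordinates of each $\bx_m$, so one cannot apply scalar concentration entrywise and must instead control the entire quadratic form at once.

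First I would invoke the standard net bound for symmetric matrices \cite{vershynin}: there is a $\tfrac14$-net $\mathcal{N}$ of the unit sphere $S^{K-1}$ with $|\mathcal{N}|\le 9^K$ (an $\epsilon$-net of $S^{K-1}$ has cardinality at most $(1+2/\epsilon)^K$, which is $9^K$ for $\epsilon=\tfrac14$), and for any symmetric $A$ one has $\norm{A}\le 2\max_{v\in\mathcal{N}}|v^\top A v|$. This turns the event $\{\norm{A}\ge t\}$ into the union over $v\in\mathcal{N}$ of the events $\{|v^\top A v|\ge t/2\}$, which is exactly where the factor $9^K = e^{K\ln 9}$ in the bound originates.

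Next, for a fixed $v\in S^{K-1}$, I would write $v^\top A v = \frac1n\sum_{m=1}^n Z_m$ with $Z_m := (\phi(\bx_m)v)^2 - \mathbb{E}[(\phi(\bx)v)^2]$, using $\mathbb{E}[\phi(\bx)^\top\phi(\bx)] = \Sigma(\bd)$ from Lemma~\ref{lm:b1}. The $Z_m$ are i.i.d.\ and centered. The key boundedness estimate is that $\phi(\bx_m)$ has all $K$ entries in $\{-1,1\}$, so $\norm{\phi(\bx_m)}_2 = \sqrt{K}$, and Cauchy--Schwarz gives $(\phi(\bx_m)v)^2 \le K$; hence $|Z_m|\le K$ and each $Z_m$ lies in an interval of length at most $2K$. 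Hoeffding's inequality then yields $\mathbb{P}(|v^\top A v|\ge t/2)\le 2\exp(-\tfrac{n t^2}{8K^2})$, where the range $2K$ together with the threshold $t/2$ combine to produce precisely the $8K^2$ in the denominator.

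Finally, a union bound over $\mathcal{N}$ combined with the net inequality gives $\mathbb{P}(\norm{A}\ge t)\le 2\cdot 9^K\exp(-\tfrac{n t^2}{8K^2}) = 2\exp(K\ln 9 - \tfrac{n t^2}{8K^2})$, which is the claim. The only delicate points are choosing the net resolution and the range bound so that the constants line up as stated, and confirming that the $Z_m$ are simultaneously independent and bounded despite the correlations among the columns of $\cX$; the remaining manipulations are routine applications of standard covering and Hoeffding estimates.
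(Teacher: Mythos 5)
Your proposal is correct and follows essentially the same route as the paper's proof: a $\tfrac14$-net of $S^{K-1}$ of cardinality $9^K$ to reduce the spectral norm to quadratic forms, the Cauchy--Schwarz bound $(\phi(\bx_m)v)^2\leq K$ (valid since the entries of $\phi(\bx_m)$ are $\pm 1$), Hoeffding's inequality on the i.i.d.\ summands across units, and a union bound, with the constants $K\ln 9$ and $8K^2$ arising exactly as you describe. Your remark that independence across rows survives the correlations among columns is the same (implicit) observation the paper relies on, so there is nothing to add.
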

\begin{proof} 
This proof loosely follows the proof of Theorem $4.5.1$ in \cite{vershynin}. Let $\cX$ be the (random) design matrix generated by dosage $\bd$. Recall that $\cX^T\cX$ is a $K\times K$ matrix. Now, let $\mathcal{N}$ be a $\frac{1}{4}-$ net on the unit sphere $S^{K-1}$ with $|\mathcal{N}|\leq 9^K$\cite{vershynin}. We have \begin{align}
        \norm{\frac{1}{n}\cX^T\cX - \Sigma(\bd)}\leq 2\max_{x\in \mathcal{N}}\left|\left\langle \left(\frac{1}{n}\cX^T\cX - \Sigma(\bd)\right)x, x\right\rangle \right| = 2\max_{x\in \mathcal{N}}\left|\frac{1}{n}\norm{\cX x}_2^2 - x^T\Sigma(\bd) x \right|
    \end{align}
    where the first norm is the spectral norm. This chain of inequalities follows from the definition of an $\epsilon-$net and the triangle inequality \cite{vershynin}.
    Let $\cX_i$ denote the $i$th row of $\cX$, and define $Z_i := \langle \cX_i, x\rangle$. Then we have
    $\norm{\cX x}^2 = \sum_{i=1}^n\langle \cX_i, x\rangle^2 = \sum_{i=1}^n Z_i^2$
    where $Z_i^2\leq K$ by Cauchy-Schwarz. It follows that $|Z_i^2 - x^T\Sigma(\bd) x|\leq 2K$, as $\mathbb{E}[Z_i^2] = x^T\Sigma(\bd) x$. Therefore, by Hoeffding's inequality, we have that
    $$\mathbb{P}\left(\Bigg|\frac{1}{n}\sum_{i=1}^n Z_i^2 - x^T\Sigma(\bd) x\Bigg|\geq t\right)\leq 2\exp\left(-\frac{nt^2}{2K^2}\right).$$
    Now, applying union bound over $\mathcal{N}$ and substituting into $(9)$, we have
    \begin{align*}
       \mathbb{P}\left(\norm{\frac{1}{n}\cX^T\cX - \Sigma}\leq t\right)&\geq 1-9^K\cdot 2\exp\left(-\frac{nt^2}{8K^2}\right) \\&= 1-2\exp\left(K\ln 9-\frac{nt^2}{8K^2}\right).
    \end{align*}
    %This implies
    %\begin{align*}
    %   \mathbb{P}\left(\norm{\frac{1}{n}X^TX - \Sigma}\leq %\delta\right)&\geq 1-2\exp\left(-Mt^2\right) 
    %\end{align*}
   % 
   % where $\delta = \sqrt{\frac{8M^3}{n}}\left(\sqrt{\ln 9+t^2}\right)\leq \sqrt{\frac{8M^3}{n}}\left(\frac{3}{2}+t\right).$
\end{proof}

\subsection{Proof of Theorem \ref{thm:4.2}}

\begin{proof}
        We have that, under the half dosage,
        \begin{align}\label{eq:ub}
        n\mathbb{E}\left[\norm{\hat{\beta}-\beta}^2\right] &\leq n\min\{\sum_{i=1}^K \frac{\sigma^2}{\lambda_i(\cX^\top\cX)}, B^2\} \notag\\&\leq \mathbb{P}\left(\norm{\frac{1}{n}\cX^\top\cX - \Sigma(\bd)}\leq \delta\right)\sum_{i=1}^K \frac{\sigma^2}{\left(\frac{\lambda_{\min}(\cX^T\cX)}{n}\right)} + \mathbb{P}\left(\norm{\frac{1}{n}\cX^\top\cX - \Sigma(\bd)} > \delta\right)nB^2 \notag\\&\leq \frac{K\sigma^2}{1-\delta}+nB^2\exp\left(K\ln 9 - \frac{n\delta^2}{8K^2}\right)
        \end{align}  
       
where we use Lemma \ref{lm:b1}, Lemma \ref{lm:conc}, and Weyl's inequality ($|\lambda_{\min}(A) - \lambda_{\min}(B)|\leq \norm{A-B}$ for real, symmetric $A, B$) in the last step.

Next, we lower bound \ref{eq:obj} for any other dosage $\bd$. Let $c = \min_i(1-|2d_i-1|)$. We have 

\begin{align}\label{eq:lb}
    n\mathbb{E}\left[\norm{\hat{\beta}-\beta}^2\right] &\geq n\min\{\sum_{i=1}^K \frac{\sigma^2}{\lambda_i(\cX^\top\cX)}, \norm{\beta}^2\} \notag\\&\geq \mathbb{P}\left(\norm{\frac{1}{n}\cX^\top\cX - \Sigma(\bd)}\leq \delta\right)\min\{\sum_{i=1}^K \frac{\sigma^2}{\lambda_i(\Sigma(\bd))+\delta}, n\norm{\beta}^2\}\notag\\&\geq \left(1-2\exp\left(K\ln 9 - \frac{n\delta^2}{8K^2}\right)\right)\min\{\sum_{i=1}^K \frac{\sigma^2}{\lambda_i(\Sigma(\bd))+\delta}, n\norm{\beta}^2\}\notag\\&\geq \left(1-2\exp\left(K\ln 9 - \frac{n\delta^2}{8K^2}\right)\right)\min\{\frac{\sigma^2}{c+\delta}+\frac{\sigma^2(K-1)}{1+\delta}, n\norm{\beta}^2\}
\end{align}
where the last step is by Lemma \ref{lm:b2} and Cauchy-Schwarz:
\begin{align}\label{eq:cs}
    \left(\sum_{i=1}^K\frac{1}{\lambda_i(\Sigma(\bd))+\delta}\right)\left(\sum_{i=1}^K \lambda_i(\Sigma(\bd))+\delta\right) \geq K^2\end{align}
with equality if and only if $\lambda_i(\Sigma(\bd))$ are equal for all $i$. In particular, because $\lambda_{\min}(\Sigma(\bd))\leq c$ by Lemma \ref{lm:b2}, we have that 
\begin{align*}
  \sum_{i=1}^K\frac{1}{\lambda_i(\Sigma(\bd))+\delta} &\geq \frac{1}{c+\delta}+\sum_{i=2}^K  \frac{1}{\lambda_i(\Sigma(\bd))+\delta}\geq \frac{1}{c+\delta}+\frac{K-1}{1+\delta} 
\end{align*}
applying Cauchy-Schwarz as we did in \eqref{eq:cs}.

 %Let $\delta_1$ be the choice of $\delta$ in (6), and $\delta_2$ the choice for (7). Then we have that if 
 %\begin{align}
 %    c < \max\left(\frac{1-2\exp\left(M\ln 9 - \frac{n\delta_2^2}{8M^2}\right)}{\frac{1}{1-\delta_1}+\exp\left(M\ln 9 - \frac{n\delta_1^2}{8M^2}\right)\frac{1}{r}},\left(\frac{1-2\exp\left(M\ln 9 - \frac{n\delta_2^2}{8M^2}\right)}{\frac{1}{1-\delta_1}+\exp\left(M\ln 9 - \frac{n\delta_1^2}{8M^2}\right)\frac{1}{r}}-r\right)^{1/2}\right)-\delta_2
 %\end{align}
 
%then the $\frac{1}{2}$ dosage beats it.
Setting $\delta=\delta_1$ in \ref{eq:ub} and $\delta = \delta_2$ in \ref{eq:lb}, the $\frac{1}{2}$ dosage is optimal to within a factor of
\begin{align*}
\frac{\frac{K\sigma^2}{1-\delta_1}+nB^2\exp\left(K\ln 9 - \frac{n\delta_1^2}{8K^2}\right)}{\left(1-2\exp\left(K\ln 9 - \frac{n\delta_2^2}{8K^2}\right)\right)\min\{\frac{K\sigma^2}{1+\delta_2}, n\norm{\beta}^2\}}
\end{align*}
which is the result of dividing expression \ref{eq:ub} by expression \ref{eq:lb}, and plugging in $c=1$ in \ref{eq:lb}.
We further have that for $n$ large enough, if 
$$c<\sigma^2\left({\frac{\frac{K\sigma^2}{1-\delta_1}+nB^2\exp\left(K\ln 9 - \frac{n\delta_1^2}{8K^2}\right)}{1-2\exp\left(K\ln 9 - \frac{n\delta_2^2}{8K^2}\right)}-\frac{\sigma^2(K-1)}{1+\delta_2}}\right)^{-1}-\delta_2$$
then $\bd$ results in a lower mean squared error than the $\frac{1}{2}$ dosage. This expression is the result of solving for $c$ such that expression \ref{eq:lb} is greater than expression \ref{eq:ub}.

Choosing $\delta_1 = \delta_2 = \left(\frac{2\ln(n)}{n}\right)^{1/2}$, we have optimality to a factor of
\begin{align*}
1+O\left(\frac{\ln(n)}{n}\right)
\end{align*}
and that the optimal solution $\bd^*$ must satisfy $c\geq 1 - O\left(\sqrt\frac{\ln(n)}{n}\right)$, i.e. 
$$\norm{\bd^* - \mathbf{\frac{1}{2}}}_\infty < O\left(\sqrt{\frac{\ln(n)}{n}}\right).$$
\end{proof}

This result can be extended to allow for arbitrary distributions over combinations, rather than product distributions over treatments as induced by dosages:
\begin{theorem}
    Allowing for any distribution over combinations, the uniform distribution over combinations is optimal to a factor of at most $1+O\left(\frac{\ln(n)}{n}\right)$. In particular, as $n\rightarrow \infty$, the uniform distribution minimizes the mean squared error of the truncated OLS estimator.
\end{theorem}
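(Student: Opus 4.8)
The plan is to reuse the machinery behind Theorem~\ref{thm:4.2} almost verbatim, observing that the only place where the product (dosage) structure was genuinely exploited can be replaced by a cleaner distribution-free argument. Let $q$ be an arbitrary distribution over $\{-1,1\}^p$, sample the $n$ rows of $\cX$ i.i.d.\ from $q$, and set $\Sigma_q = \bbE_{\bx\sim q}[\phi(\bx)^\top\phi(\bx)] = \tfrac1n\bbE[\cX^\top\cX]$. First I would record two facts that hold for \emph{every} $q$: (i) each diagonal entry of $\Sigma_q$ is $\bbE[\phi_S(\bx)^2]=1$ because $\phi_S(\bx)=\prod_{i\in S}x_i\in\{-1,1\}$, so $\mathrm{tr}(\Sigma_q)=K$ and the eigenvalues of $\Sigma_q$ sum to $K$; and (ii) under the uniform distribution the Fourier characters are orthonormal, i.e.\ $\bbE[\phi_S(\bx)\phi_{S'}(\bx)]$ equals $1$ if $S=S'$ and $0$ otherwise, so $\Sigma_{\mathrm{unif}}=\mathbf{I}_K$, matching the half-dosage computation in Lemma~\ref{lm:b1}.

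Next I would check that the concentration bound of Lemma~\ref{lm:conc} continues to hold verbatim, with the same constants, for any $q$. Its proof only uses that the rows $\cX_i=\phi(\bx_i)$ are i.i.d.\ and satisfy $\|\cX_i\|_2^2=K$; the latter is distribution-free since every Fourier character is $\pm1$-valued, so the bounds $Z_i^2\le K$ and $|Z_i^2 - x^\top\Sigma_q x|\le 2K$ used in the Hoeffding step are unchanged. Hence $\|\tfrac1n\cX^\top\cX-\Sigma_q\|\le\delta$ with probability at least $1-2\exp(K\ln9-n\delta^2/(8K^2))$. Combining (ii) with Weyl's inequality then reproduces the upper bound of Eq.~\eqref{eq:ub} for the uniform distribution, namely $n\bbE_{\bbD}[\|\hat{\bbeta}-\bbeta\|_2^2]\le \tfrac{K\sigma^2}{1-\delta_1}+nB^2\exp(K\ln9-n\delta_1^2/(8K^2))$.

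The lower bound for a competing $q$ is where the simplification occurs. Following Eq.~\eqref{eq:lb}, on the concentration event Weyl's inequality gives $\lambda_i(\tfrac1n\cX^\top\cX)\le\lambda_i(\Sigma_q)+\delta$, and the lower bound of Lemma~\ref{lm:4.1} yields $n\bbE_{\bbD}[\|\hat{\bbeta}-\bbeta\|_2^2]\ge(1-2\exp(\cdots))\min\{\sum_{i=1}^K\tfrac{\sigma^2}{\lambda_i(\Sigma_q)+\delta},\,n\|\bbeta\|_2^2\}$. In the dosage proof this was bounded below using Lemma~\ref{lm:b2}; here I would instead apply Cauchy--Schwarz directly together with the trace identity from (i): since $\sum_i(\lambda_i(\Sigma_q)+\delta)=K(1+\delta)$, one gets $\sum_{i=1}^K\tfrac{1}{\lambda_i(\Sigma_q)+\delta}\ge\tfrac{K^2}{K(1+\delta)}=\tfrac{K}{1+\delta}$ for \emph{every} $q$. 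Thus the lower bound $(1-2\exp(\cdots))\min\{\tfrac{K\sigma^2}{1+\delta_2},\,n\|\bbeta\|_2^2\}$ holds uniformly over all distributions.

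Finally I would divide the uniform upper bound by this universal lower bound and set $\delta_1=\delta_2=(2\ln n/n)^{1/2}$, exactly as in Theorem~\ref{thm:4.2}, obtaining the optimality factor $1+O(\ln(n)/n)$; letting $n\to\infty$ sends the exponential terms and $\delta$ to zero, so the ratio tends to $1$, giving the asymptotic MSE-optimality claim. I expect the only real subtlety — and the step to state most carefully — is the assertion that the lower bound is genuinely uniform over the \emph{enlarged} class of distributions: this rests entirely on the trace constraint $\mathrm{tr}(\Sigma_q)=K$, which in turn is just the $\pm1$-boundedness of the Fourier characters, so no structural assumption on $q$ (such as being a product measure) is needed. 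In fact this makes the general case cleaner than the dosage case, since the Cauchy--Schwarz/trace bound replaces the eigenvalue estimate of Lemma~\ref{lm:b2}.
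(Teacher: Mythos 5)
Your proposal is correct and takes essentially the same route as the paper, which likewise observes that $\mathrm{tr}(\Sigma(g))=K$ for every distribution $g$ over combinations, invokes the Cauchy--Schwarz argument to conclude the sum of inverse eigenvalues is minimized at $\Sigma(g)=\mathbf{I}_K$ (achieved exactly by the uniform distribution), and then repeats the proof of Theorem~\ref{thm:4.2}. The two details you spell out --- that Lemma~\ref{lm:conc} is distribution-free because every row satisfies $\|\phi(\mathbf{x})\|_2^2=K$, and that the product-specific Lemma~\ref{lm:b2} can be replaced by the trace/Cauchy--Schwarz bound (equivalently, setting $c=1$ in the paper's final ratio) --- are precisely what the paper's terse ``repeat the argument above'' implicitly requires.
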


\begin{proof}
    The same argument used to show Lemma \ref{lm:b1} can be extended to arbitrary distributions. Let $\Sigma(g) = \mathbb{E} \phi(\bx)^T\phi(\bx)$ where $\phi(\bx)$ is distributed according to the distribution $g$ over combinations. Then, once again, $\Sigma(g)$ has trace $K$ for all $g$, and $\sum_{i=1}^K\frac{\sigma^2}{\lambda_i(\Sigma(g))}$ is minimized when $\Sigma(g)$ is the identity matrix (refer to the Cauchy-Schwarz argument above). This is achieved by $g = \mathcal{U}(\{-1,1\}^p)$, i.e. the uniform distribution over combinations. Therefore, we may repeat the argument above to get the same result on the optimality factor of the uniform distribution in this more general case.
\end{proof}

\subsection{Proof of Theorem \ref{thm:4.4}}
\begin{proof}
 Let $P = \frac{1}{n}\sum_{i=1}^{t-1}\cX_i^T\cX_i$, where $\cX_i$ is the design matrix from round $i$. Now, define $$\bd^* = \argmin_d \sum_{i=1}^K \frac{1}{\lambda_i(\Sigma(\bd)+P)}.$$
 We begin by showing an upper bound on \ref{eq:obj} when the design matrix at round $t$ is generated by $\bd^*$. Let $\cX$ denote the cumulative design matrix after $t$ rounds, so that $\cX^T\cX = \cX_t^T\cX_t + nP$. We have

        \begin{align}\label{eq:ub2}
        n\mathbb{E}\left[\norm{\hat{\beta}-\beta}^2\right] &\leq n\min\{\sum_{i=1}^K \frac{\sigma^2}{\lambda_i(\cX^T\cX)}, B^2\} \notag\\&\leq \sum_{i=1}^K \frac{\sigma^2}{\left(\frac{\lambda_{\min}(\cX^T\cX)}{n}\right)} + \mathbb{P}\left(\norm{\left(\frac{1}{n}\cX_{t}^\top\cX_{t}+P\right) - \left(\Sigma(\bd^*)+P\right)}> \delta\right)nB^2 \notag\\&\leq \sum_{i=1}^K\frac{\sigma^2}{\lambda_i(\Sigma(\bd^*)+P)-\delta}+nB^2\exp\left(K\ln 9 - \frac{n\delta^2}{8K^2}\right)
        \end{align}  
       
Where we use Lemma \ref{lm:conc} and Weyl's inequality, as in the proof of Theorem \ref{thm:4.2}.

Next, we lower bound \ref{eq:obj} for any other dosage $\bd$. We have 

\begin{align}\label{eq:lb2}
    n\mathbb{E}\left[\norm{\hat{\beta}-\beta}^2\right] &\geq n\min\{\sum_{i=1}^K \frac{\sigma^2}{\lambda_i(\cX^\top\cX)}, \norm{\beta}^2\} \notag\\&\geq \mathbb{P}\left(\norm{\frac{1}{n}\cX^\top\cX - (\Sigma(\bd)+P)}\leq \delta\right)\min\{\sum_{i=1}^K \frac{\sigma^2}{\lambda_i(\Sigma(\bd)+P)+\delta}, n\norm{\beta}^2\}\notag\\&\geq \left(1-2\exp\left(K\ln 9 - \frac{n\delta^2}{8K^2}\right)\right)\min\{\sum_{i=1}^K \frac{\sigma^2}{\lambda_i(\Sigma(\bd)+P)+\delta}, n\norm{\beta}^2\}
\end{align}

 %Let $\delta_1$ be the choice of $\delta$ in (6), and $\delta_2$ the choice for (7). Then we have that if 
 %\begin{align}
 %    c < \max\left(\frac{1-2\exp\left(M\ln 9 - \frac{n\delta_2^2}{8M^2}\right)}{\frac{1}{1-\delta_1}+\exp\left(M\ln 9 - \frac{n\delta_1^2}{8M^2}\right)\frac{1}{r}},\left(\frac{1-2\exp\left(M\ln 9 - \frac{n\delta_2^2}{8M^2}\right)}{\frac{1}{1-\delta_1}+\exp\left(M\ln 9 - \frac{n\delta_1^2}{8M^2}\right)\frac{1}{r}}-r\right)^{1/2}\right)-\delta_2
 %\end{align}
 
%then the $\frac{1}{2}$ dosage beats it.
Setting $\delta=\delta_1$ in \ref{eq:ub2} and $\delta = \delta_2$ in \ref{eq:lb2}, the $\bd^*$ is optimal to within a factor of
\begin{align*}
\frac{\sum_{i=1}^K\frac{\sigma^2}{\lambda_i(\Sigma(\bd^*)+P)-\delta_1}+nB^2\exp\left(K\ln 9 - \frac{n\delta_1^2}{8K^2}\right)}{\left(1-2\exp\left(K\ln 9 - \frac{n\delta_2^2}{8K^2}\right)\right)\min\{\sum_{i=1}^K \frac{\sigma^2}{\lambda_i(\Sigma(\bd)+P)+\delta_2}, n\norm{\beta}^2\}}
\end{align*}
which is the result of dividing expression \ref{eq:ub2} by expression \ref{eq:lb2}.

Choosing $\delta_1 = \delta_2 = \left(\frac{2\ln(n)}{n}\right)^{1/2}$, we have optimality of $\bd^*$ to a factor of at most
\begin{align*}
1+O\left(\frac{\ln(n)}{n}\right).
\end{align*}
\end{proof}

\section{Proofs for Section~\ref{sec:4}}\label{app:sec4}
\subsection{Proof of Theorem \ref{thm:4.1}}
\begin{proof}
Following the proof of theorems \ref{thm:4.2}, and noting that both terms in Eq.~\eqref{eq:olsridge} are decreasing in $\lambda_{\min}(\cX^T\cX)$, it suffices to show that among dosages satisfying $\sum_{i=1}^p d_i\leq L$, the uniform dosage with values $\frac{L}{p}$ leads to $\Sigma(\bd)$ with the highest minimum eigenvalue. When $k=1$, $\Sigma(\bd)$ can be written as below.
Let $c_i = 1-(2d_i-1)^2$. Define the following two matrices: $y$ is the $p-$length column vector with $y_i = 2d_i - 1$, and $C$ is the diagonal matrix with $C_{ii} = c_i$. Then
\[
\Sigma(\bd) = \begin{bmatrix}
1 & y^T \\
y & yy^T+C
\end{bmatrix}
\]

We compute $\det(\Sigma(\bd)-\lambda \mathbf{I}_K)$ using the formula for the determinant of a block matrix and the matrix determinant lemma.
Let $c_i = 1-(2d_i-1)^2$. Then for $\lambda\neq 1, c_i$ for any $i$, we have
    \begin{align*}
        \det(\Sigma(\bd)-\lambda \mathbf{I}_K) &= (1-\lambda)\det(yy^T + C - \lambda \mathbf{I}_K - \frac{1}{1-\lambda}yy^T) \\&=(1-\lambda)\prod_{i=1}^p(c_i-\lambda)\left[1-\frac{\lambda}{1-\lambda}\sum_{i=1}^p\frac{1-c_i}{c_i-\lambda}\right]
    \end{align*}
Therefore, the eigenvalues can be $1$, $c_i$ for any $i$, or the solutions to $1-\frac{\lambda}{1-\lambda}\sum_{i=1}^p\frac{1-c_i}{c_i-\lambda} = 0$. Define $$g_{\bd}(\lambda) = 1-\frac{\lambda}{1-\lambda}\sum_{i=1}^p\frac{1-c_i}{c_i-\lambda}$$ where $c_i$'s are defined according to $\bd$.
    
WLOG, assume $c_1\leq c_2\ldots \leq c_p$. We first note that the minimum eigenvalue must lie in $[0, c_1)$, as $g_\bd(\lambda)$ must have a root in this interval. This is because $g(0) = 1$ (unless the $d_i = 0$ or $1$ for some $i$, in which case $\Sigma(\bd)$ is singular) and $\lim_{\lambda\rightarrow c_1} g(\lambda) = -\infty$.\\
Now, let $\bd^*$ be the uniform dosage with elements $\frac{L}{p}$, so that $c^* = 1-\left(\frac{2L}{p}-1\right)^2$. The minimum eigenvalue here is given by $$\lambda^* = \frac{1}{2}\left(c^*+1+p(1-c^*)-\sqrt{(c^*+1+p(1-c^*))^2-4c^*}\right),$$
so it suffices to show that for any dosage (satisfying the constraint) that $$c_1 > \lambda^*\Rightarrow g_\bd(\lambda^*) \leq 0,$$
implying that there is a root to $g_\bd(\lambda)$ that is less than or equal to $\lambda^*$.\\
Note that $g_{\bd^*}(\lambda^*) = 0$, so it suffices to show that
$$c_1 > \lambda^* \Rightarrow g_\bd(\lambda^*)\leq g_{\bd^*}(\lambda^*).$$
Now, treating $g$ as a function of $\mathbf{c} = (c_1,c_2,\ldots c_p)$ parametrized by $\lambda$, it suffices to show that $g$ is concave in $\mathbf{c}$. This would imply that a maximizer exists at a uniform dosage (since $g$ is symmetric in $\mathbf{c}$), and that dosage must be $\frac{L}{p}$ as $h(c) = \frac{1-c}{c-\lambda}$ is decreasing in $c$. We have concavity of $g$ as the Hessian is a diagonal matrix with the $i$th element being $\frac{-2\lambda^*}{(c_i-\lambda^*)^3}\leq 0$ as $c_1 > \lambda^*$. Therefore, the minimum eigenvalue of $\Sigma(\bd)$ is indeed maximized at $\bd^*$, and we may proceed with the proof as in Theorem \ref{thm:4.2}.
\end{proof}

\section{Experiment Details}\label{app:sec-exp}
Code can be found at the linked \href{https://github.com/uhlerlab/prob_fact_design}{repository}. Below we give a few additional details of our experiments.

\textbf{Hardware and libraries.}
Experiments were run on a device with a $16$ core Intel Core Ultra $7$ $165$H processor with $32$ GB RAM, and an NVIDIA RTX $4000$ Mobile Ada Generation $12$ GB GPU. The code is implemented in Python, utilizing the \texttt{cupy} and \texttt{numba} libraries, among others. The active design optimization was done using {\tt scipy} \texttt{SLSQP} solver.

%%%%%%%%%%%%%%%%%%%%%%%%%%%%%%%%%%%%%%%%%%%%%%%%%%%%%%%%%%%%%%%%%%%%%%%%%%%%%%%
%%%%%%%%%%%%%%%%%%%%%%%%%%%%%%%%%%%%%%%%%%%%%%%%%%%%%%%%%%%%%%%%%%%%%%%%%%%%%%%

\end{document}